\documentclass[10pt,twocolumn,letterpaper]{article}

\usepackage{cvpr}              % To produce the CAMERA-READY version
\definecolor{cvprblue}{rgb}{0.21,0.49,0.74}
\usepackage[pagebackref,breaklinks,colorlinks,allcolors=cvprblue]{hyperref}
\usepackage{algorithm}
\usepackage{amsthm}
\usepackage{multirow}  
\usepackage{colortbl}
\usepackage{wrapfig}
\usepackage{amssymb}
\usepackage{xspace}

\usepackage[noend]{algpseudocode} % from algorithmicx

\newcommand{\method}{\emph{Spectrum}\xspace}

\def\paperID{} % *** Enter the Paper ID here
\def\confName{CVPR}
\def\confYear{2026}

\title{Adaptive Spectral Feature Forecasting for Diffusion Sampling Acceleration}

\makeatletter
\def\@fnsymbol#1{\ensuremath{\ifcase#1\or \dagger\or \ddagger\or
   \mathsection\or \mathparagraph\or \|\or **\or \dagger\dagger
   \or \ddagger\ddagger \else\@ctrerr\fi}}
    \makeatother

\author{Jiaqi Han$^{1\dagger}$ \hspace{0.8em} Juntong Shi$^{1\dagger}$ \hspace{0.8em} Puheng Li$^1$ \hspace{0.8em} Haotian Ye$^1$ \hspace{0.8em} Qiushan Guo$^2$ \hspace{0.8em} Stefano Ermon$^1$\\
$^1$Stanford University \hspace{2em} $^2$ByteDance\\
\url{https://hanjq17.github.io/Spectrum} \\
}

\usepackage{amsmath,amsfonts,bm}

\def\eqref#1{equation~\ref{#1}}
\def\floor#1{\lfloor #1 \rfloor}
\def\1{\bm{1}}

\def\rd{{\textnormal{d}}}

\def\rvh{{\mathbf{h}}}

\def\rvx{{\mathbf{x}}}

\def\rmA{{\mathbf{A}}}

\def\rmC{{\mathbf{C}}}

\def\rmE{{\mathbf{E}}}

\def\rmH{{\mathbf{H}}}
\def\rmI{{\mathbf{I}}}

\DeclareMathAlphabet{\mathsfit}{\encodingdefault}{\sfdefault}{m}{sl}
\SetMathAlphabet{\mathsfit}{bold}{\encodingdefault}{\sfdefault}{bx}{n}

\def\gA{{\mathcal{A}}}

\def\gN{{\mathcal{N}}}
\def\gO{{\mathcal{O}}}

\def\gR{{\mathcal{R}}}

\def\gW{{\mathcal{W}}}
\def\gX{{\mathcal{X}}}

\def\sC{{\mathbb{C}}}

\def\sN{{\mathbb{N}}}

\def\sR{{\mathbb{R}}}

\def\sT{{\mathbb{T}}}
\def\sU{{\mathbb{U}}}
\def\sV{{\mathbb{V}}}

\DeclareMathOperator*{\argmin}{arg\,min}

\newtheorem{theorem}{Theorem}[section]

\begin{document}
\twocolumn[{
\maketitle
\centering
\vspace{-0.5cm}
\captionsetup{type=figure}
\includegraphics[width=\linewidth]{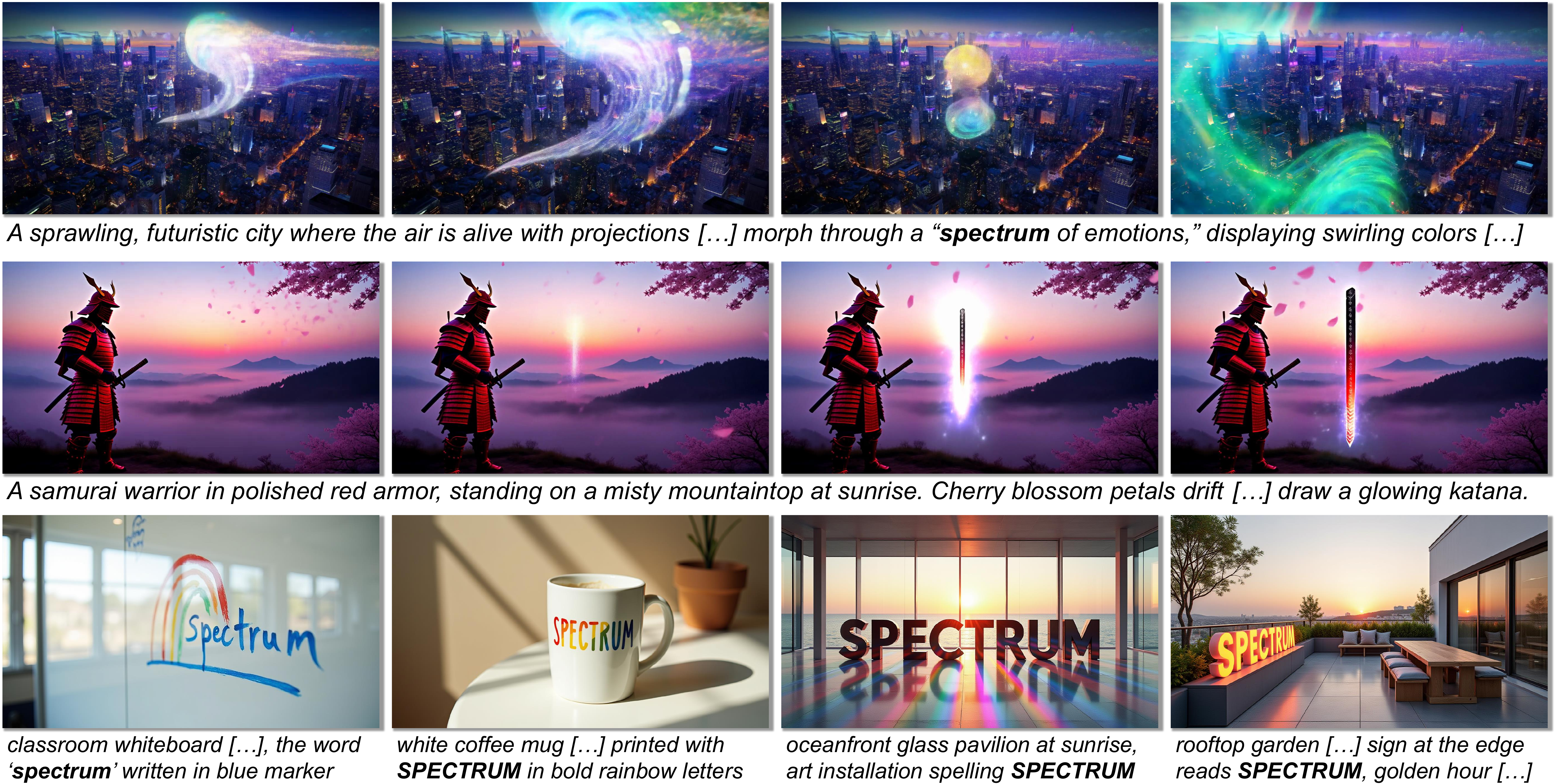}
\vskip -0.05in
\caption{Text-to-video (top) and text-to-image (bottom) generated samples using our~\method on HunyuanVideo and FLUX.1, respectively. Samples were generated using only \textbf{14 network evaluations}, leading to a significant speedup of 3.5$\times$ without quality degradation.}
\label{fig:show_muscle}
\vskip 0.2in
% \vspace{0.7cm}
}]

\def\thefootnote{$\dagger$}\footnotetext{Equal. Correspondence to~\url{jiaqihan@stanford.edu}.} 
\def\thefootnote{\arabic{footnote}}

\begin{abstract}
Diffusion models have become the dominant tool for high-fidelity image and video generation, yet are critically bottlenecked by their inference speed due to the numerous iterative passes of Diffusion Transformers. To reduce the exhaustive compute, recent works resort to the feature caching and reusing scheme that skips network evaluations at selected diffusion steps by using cached features in previous steps. However, their preliminary design solely relies on local approximation, causing errors to grow rapidly with large skips and leading to degraded sample quality at high speedups. In this work, we propose spectral diffusion feature forecaster (\method), a training-free approach that enables global, long-range feature reuse with tightly controlled error. In particular, we view the latent features of the denoiser as functions over time and approximate them with Chebyshev polynomials. Specifically, we fit the coefficient for each basis via ridge regression, which is then leveraged to forecast features at multiple future diffusion steps. We theoretically reveal that our approach admits more favorable long-horizon behavior and yields an error bound that does not compound with the step size.
Extensive experiments on various state-of-the-art image and video diffusion models consistently verify the superiority of our approach. Notably, we achieve up to 4.79$\times$ speedup on FLUX.1 and 4.67$\times$ speedup on Wan2.1-14B, while maintaining much higher sample quality compared with the baselines.
\end{abstract}

\section{Introduction}
\label{sec:intro}

Diffusion models have become the preeminent tool for high-fidelity image and video synthesis~\cite{rombach2022high,song2019generative,song2021scorebased,sohl2015deep,ho2020denoising,esser2024scaling,yang2023diffusionm}. Their success, however, comes with a steep computational price: their sampling requires dozens to hundreds of evaluations of a large denoiser, which, for most state-of-the-art models, is typically a Diffusion Transformer~\cite{peebles2023scalablediffusionmodelstransformers,liu2025regionadaptivesamplingdiffusiontransformers} with deep stacks of attention blocks. The dominant cost of inference poses practical limits for interactive applications that call for real-time model responses.

To address the challenge, various works have been proposed to reduce the sampling latency of diffusion models. Amongst them, feature caching-based approaches~\cite{lvFasterCacheTrainingFreeVideo2025,zou2024accelerating,zou2024DuCa,liu2024timestep,liu2025taylor} have stood out and shown great promise in remarkably reducing the sampling time while maintaining desirable sample quality, all without additional training. Concretely, these methods cache latent features produced by certain blocks at selected timesteps and use them to synthesize features at future timesteps via light-weight predictors, thereby avoiding expensive denoiser evaluations. In particular, the naïve copy strategy directly reuses the most recent cached feature~\cite{liu2024timestep,selvaraju2024fora}, while the recent work~\cite{liu2025taylor} performs discrete Taylor expansion using a few nearest cached points.

Despite their promise, through our error analysis and empirical verification, we find these \emph{local} predictors incur approximation errors that compound rapidly as the forecasting horizon grows, leading to severe quality degradation at high speedup ratios. Furthermore, the local predictors, such as Taylor expansion, struggle to capture the long-range temporal dynamics of features along the sampling process, and are thus prone to producing images with inaccurate semantics. These limitations severely hinder their practical utility, particularly in high-speedup scenarios.

% \paragraph{From local to spectral forecasting.}
% \looseness=-1
To this end, we move beyond local approximations and instead construct the forecaster in the \emph{spectral} domain. Our key insight is to view each feature channel of the denoiser output along the diffusion trajectory as a function over time and approximate it with a compact set of global, orthonormal bases in function space. Notably, we leverage \emph{Chebyshev polynomials}~\cite{rivlin1974chebyshev} to fit the functions via a ridge regression objective, motivated by their well-conditioned recurrences and favorable error behavior at long horizons~\cite{mason2002chebyshev}. The fitted coefficients are then employed to predict features at future steps, enabling large skips of actual network evaluations. The framework, which we coin \emph{Spectral Diffusion Feature Forecaster} (\method), not only yields more tightly bounded error but also injects crucial long-range spectral semantics to the generated sample, substantially reducing the error and enhancing sample quality. With the benefits of such design,~\method produces image and video samples using only 14 network evaluations without quality degradation, as demonstrated in Fig.~\ref{fig:show_muscle}.

In summary, we make the following major contributions. \textbf{1.} We introduce spectral diffusion feature forecasting (\method), the first feature caching-based diffusion acceleration approach that operates in the spectral domain. Critically, we leverage Chebyshev polynomials as the basis set due to its desirable properties for error reduction. \textbf{2.} We perform thorough error analysis for~\method and compare it against previous local predictors, demonstrating tighter error bound at large time horizons. Notably,~\method admits an error bound that does not scale with the skip horizon. \textbf{3.} We propose a tractable and efficient method to fit the coefficients of the Chebyshev bases with negligible compute and memory overhead. \textbf{4.} We conduct extensive experiments on a suite of state-of-the-art image and video diffusion models. Results consistently verify that~\method yields remarkable speedups while maintaining much higher sample quality compared with previous feature caching approaches. Notably,~\method obtains up to 4.79$\times$ speedup on FLUX.1 and 3.40 to 4.67$\times$ speedup on Wan2.1-14B, all without notable quality degradation. These evidence favors our~\method as a solid step towards streamlined, real-time diffusion generation.

\section{Related Work}
\label{sec:rw}

Rich efforts have been taken to reduce the sampling latency of diffusion models, which we discuss as follows.

\noindent\textbf{Diffusion sampling acceleration via distillation.} This thread of works aims to distill multi-step teacher models into few-step students via additional training. Specifically, consistency models~\cite{song2023consistency,song2024improved} and subsequent efforts~\cite{kim2023consistency,lu2025simplifying} enforce consistency along the diffusion trajectory, while DMD~\cite{yin2024stepdistillation,yin2024improved} and related works~\cite{salimans2022progressive,salimans2024multistep} distill the student model by aligning with teacher score. Recent endeavors~\cite{geng2025mean,sabour2025align,liu2023flow} adopt these techniques to flow matching models~\cite{refitiedflow,lipman2023flow} and have demonstrated success. 
While effective, these approaches require extra training and is prone to decreased diversity due to distillation, while our method is purely \emph{training-free} and leaves the base denoiser intact.

\noindent\textbf{Diffusion sampling acceleration with ODE solvers.}
Another family of work decreases the number of integration steps by leveraging more accurate ODE solvers of the probability flow ODE~\cite{song2021scorebased}. Pioneered by DDIM~\cite{song2020denoising}, various works, \emph{e.g.}, DPMSolver~\cite{lu2022dpm,lu2022dpm++,zheng2023dpmsolvervF}, have introduced high-order solvers that simulate the reverse ODE with less error accumulation. Yet, when the number of sampling steps is extremely limited, they still deviate significantly from the original trajectory, leading to quality degradation. Our~\method is agnostic to the choice of solver and remains close to the reference trajectory by replacing unnecessary network evaluations with inexpensive forecasts.

\noindent\textbf{Diffusion acceleration by reusing cached features.}
Most related to our present approach is the diffusion feature caching methods that avoid executing the denoiser at selected timesteps by using the cached features from historical activations. The simplest approach naively reuses the nearest cached latents~\cite{selvaraju2024fora,chen2024delta-dit}, which over-simplifies the feature correlation, leading to drops in sample quality. ToCa~\cite{zou2024accelerating} and subsequent works~\cite{Zheng2025Compute,zou2024DuCa,Liu2025SpeCa} dynamically cache attention features.
TeaCache~\cite{liu2024timestep} estimates time-dependent input-output difference to actively decide when to cache, while recent works~\cite{lvFasterCacheTrainingFreeVideo2025,sun2025unicpunifiedcachingpruning,liu2025regionadaptivesamplingdiffusiontransformers,guan2025forecasting} develop various dynamic caching strategies.
Taylor-style forecasters improve upon this by discrete finite-difference expansions over a short temporal horizon~\cite{liu2025taylor}. However, their locality leads to large discretization error, making them less favorable in high speedup scenarios.
Differently,~\method shifts from \emph{local} time-domain approximations to \emph{global} spectral forecasting, yielding better long-range behavior and more accurate simulation along the sampling trajectory.

\noindent\textbf{Other techniques.}
A complementary stream reduces the cost per denoiser call through quantization~\cite{li2023q,kim2025ditto,shang2023post}, network pruning~\cite{structural_pruning_diffusion,zhu2024dipgo}, token reduction~\cite{zhang2024tokenpruningcachingbetter,cheng2025catpruningclusterawaretoken,saghatchian2025cached}, or kernel-level optimizations~\cite{Dao2022FlashAttentionFA}. Parallel sampling has also been explored that partitions and solves the trajectory with multiple cores~\cite{shih2024parallel,tang2024acceleratingparallelsamplingdiffusion,han2025chords}. These techniques are indeed orthogonal to our~\method, which can be potentially combined together to further improve the wall-clock performance.

\section{Method}
\label{sec:method}

\subsection{Feature Forecasting for Diffusion Acceleration}
\label{sec:feature-forecasting}
\textbf{Diffusion model and diffusion sampling.} Given a data point $\rvx\in\gX=\sR^{F}$ sampled from the data distribution, diffusion models feature a forward process that progressively perturbs the input towards unit Gaussian noise $\bm\epsilon\sim\gN(\bm{0},\rmI)$.
A neural network 
$
\bm\epsilon_\theta : \mathcal X \times [0,1] \mapsto \mathcal X
$
is optimized to approach the score function $\nabla_\rvx\log p_t(\rvx)$ through, \emph{e.g.}, denoising score matching~\cite{song2021scorebased,song2019generative}, where $p_t$ is the marginal at time $t$.  Critically, the sampling of diffusion models is grounded on the reverse process, typically parameterized by the probability-flow ODE~\cite{song2021scorebased}:
\begin{align}
\label{eq:pf-ode}
    \rd\rvx=\left(f(t)\rvx-\frac{1}{2}g^2(t)\bm \epsilon_\theta(\rvx, t)\right)\rd t,
\end{align}
where $f(t)$ and $g(t)$ are scalar functions specified by the noise schedule. Various types of ODE solvers~\cite{karras2022elucidating,song2020denoising,lu2022dpm} can be readily applied to simulate the ODE via, \emph{e.g.}, discretizing uniformly over $N$ timesteps $\sT=\{t_i\}_{i=1}^N$ with $t_i=(i-1)\delta_t$, where $\delta_t=\frac{1}{N}$ is the interval between two adjacent timesteps. For instance, in the case of Euler solver, each step can be formulated as the following iteration:
\begin{align}
\label{eq:ode-solve-step}
    \rvx_{t_{i+1}} = \mathrm{Solve}\left(\rvx_{t_i}, \bm\epsilon_\theta, t_i,t_{i+1}  \right).
\end{align}
To obtain the clean data at $t=1$, Eq.~\ref{eq:ode-solve-step} will be iterated $N$ times, leading to a total of $N$ function calls to the denoiser $\bm\epsilon_\theta$. Empirically, for the state-of-the-art diffusion models, the denoiser is usually parameterized by Diffusion Transformers~\cite{peebles2023scalablediffusionmodelstransformers} that comprise a sequential stack of dozens of attention blocks, making the entire sampling process compute-demanding and time-consuming.

\noindent\textbf{Feature caching and reusing.} To tackle the challenge, a line of works propose to cache the output features of the attention blocks at previous timesteps and reuse them to skip the actual forward at future timesteps. Formally, we unify this class of algorithms as $\gA=(\sU,\sV,f)$ where $\sU\subseteq\sT$ denotes the set of timesteps that requires the expensive forward pass of the denoiser $\bm\epsilon_\theta$, and $f$ is the forecaster whose output is employed as the surrogate of the network output for timesteps in $\sV=\sT\setminus\sU$. Specifically, at certain timestep $t_j\in\sV$, the forecaster $f$ predicts the feature by using the cached features at the same block\footnote{We omit the block index $l$ for conciseness.}:
\begin{align}
\label{eq:cache-def}
    {\rvh}_{t_j} = f(\sC_{t_j}   ),\quad \text{s.t.} \ \  \sC_{t_j}=\{ (\rvh_{t_k}, t_k) : t_k < t_j,t_k\in\sU  \},
\end{align}
where $\sC_{t_j}$ is the \emph{feature cache} at the current timestep $t_j$, consisting of the features $\rvh_{t_k}\in\sR^F$ collected from actual passes of $\bm\epsilon_\theta$ at timesteps $t_k\in\sU$. The theoretical speedup measured by the ratio of the number of network evaluations will be $\eta=(\|\sU\|+\|\sV\|)/\|\sU\|=\|\sT\|/\|\sU\|$.

\noindent\textbf{Existing designs of $\gA$.} Empirically, due to the strong temporal proximity of the latent features between adjacent timesteps, several designs of $f$ have been found highly effective, including:
\begin{itemize}
    \item Naive reusing~\cite{selvaraju2024fora,liu2024timestep}, which directly copies the cached feature at the nearest timestep $t_k$ in $\sC_{t_j}$, \emph{i.e.}, $f(\sC_{t_j})=\rvh_{t_k}$ where $t_k=\max_t \{(\rvh,t)\in \sC_{t_j}  \}$.

    \item TaylorSeer~\cite{liu2025taylor}, which leverages a discrete Taylor expansion that predicts the feature based on $P+1$ nearest cached timesteps:
    \begin{align}
    \label{eq:taylor}
       \rvh_{t_j}^{\text{Taylor}} = \rvh_{t_k}+\sum_{p=1}^P \frac{\Delta^p\rvh_{t_k}}{p!}(\frac{j-k}{\eta})^p,
    \end{align}
    where $\Delta^p$ is the $p$-th order forward differential operator, which is defined as 
    \[
    \Delta^p \rvh_{t_k}=\sum_{i=0}^p(-1)^{p-i}\binom{p}{i} \rvh_{t_{k - i\eta}}, \quad p\leq \frac{k}{\eta}.
    \]
\end{itemize}
Notably, naive reusing can be viewed as the special case of TaylorSeer when the maximum order $P=0$.

For the timesteps $\sU$ to perform an actual pass, the \emph{fixed interval} strategy is commonly employed, \emph{i.e.}, $\sU=\{ (j-1)\eta\delta_t \}_{j=1}^{\floor{N/\eta}}$ for speedup ratio $\eta$ and unit timestep $\delta_t$.

% Define diffusion sampling procedure and notations.
% Define feature forecasting formulation.
\subsection{From Local to Global: Bounding Forecasting Errors with Chebyshev Polynomials}
\noindent\textbf{Pitfalls of the Taylor forecaster.} While demonstrated to be effective, from Eq.~\ref{eq:taylor} it is evident that the prediction of the Taylor forecaster is grounded on discrete Taylor expansion using several most adjacent timesteps. Such a nature is prone to introducing a significantly growing approximation error when the forecasting step is large. Our observation is motivated by the following error analysis on the Taylor forecaster, as stated in the theorem below:
\begin{theorem}[Worst-case error for local order-$P$ Taylor]
\label{thm:taylor-minimax-lb}
    Fix an expansion point $\tau_k\in[0,1]$ and a target $\tau_j=\tau_k+ (j - k) \delta_t$.
Consider the smoothness class
\small{
\[
\mathcal{F}_{P+1}(L)\;=\;\bigl\{f\in C^{P+1}([0,1])\;:\;\|f^{(P+1)}\|_\infty \le L\bigr\},\quad L>0.
\]}
Let $T_P[f](\tau_j)$ denote the \emph{ideal} order-$P$ Taylor predictor of $f(\tau_j)$ centered at $\tau_k$ using the exact derivatives $f^{(p)}(\tau_k)$, $p\le P$.
Then
\[
\sup_{f\in \mathcal{F}_{P+1}(L)}\; \big|\, f(\tau_j)-T_P[f](\tau_j)\,\big| \;=\; \frac{L}{(P+1)!}\,\left((j-k)\delta_t\right)^{P+1}.
\]
\end{theorem}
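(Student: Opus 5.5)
We prove the identity by establishing matching upper and lower bounds, writing $h=(j-k)\delta_t=\tau_j-\tau_k$. We assume $\tau_j\in[0,1]$, so that $f(\tau_j)$ is defined. We also treat $h\ge 0$; the case $h<0$ is symmetric once $h^{P+1}$ is read as $|h|^{P+1}$.

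\textbf{Upper bound.} For any $f\in\mathcal{F}_{P+1}(L)$, we apply Taylor's theorem with the Lagrange (or integral) remainder centered at $\tau_k$:
\[
f(\tau_j)-T_P[f](\tau_j)=\frac{f^{(P+1)}(\xi)}{(P+1)!}\,h^{P+1}
\]
for some $\xi$ between $\tau_k$ and $\tau_j$. This is valid because $f\in C^{P+1}$ and the segment $[\tau_k,\tau_j]$ lies inside $[0,1]$. Bounding $|f^{(P+1)}(\xi)|\le L$ then gives the error bound $\frac{L}{(P+1)!}|h|^{P+1}$ uniformly over the class. Equivalently, the integral form $\frac{1}{P!}\int_{\tau_k}^{\tau_j} f^{(P+1)}(s)(\tau_j-s)^P\,\mathrm{d}s$ yields the same bound after integrating $(\tau_j-s)^P$.

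\textbf{Lower bound (attainment).} We exhibit the extremal function $f^\star(\tau)=\frac{L}{(P+1)!}(\tau-\tau_k)^{P+1}$. It is a polynomial, hence in $C^{P+1}([0,1])$, and $f^{\star(P+1)}\equiv L$, so $f^\star\in\mathcal{F}_{P+1}(L)$. All derivatives of orders $0,\dots,P$ vanish at $\tau_k$, so $T_P[f^\star]\equiv 0$. The error is therefore exactly $f^\star(\tau_j)=\frac{L}{(P+1)!}h^{P+1}$, in absolute value $\frac{L}{(P+1)!}|h|^{P+1}$. The supremum is thus attained, which gives equality with the upper bound.

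\textbf{Main obstacle.} The argument is essentially routine; the only delicate points are bookkeeping. First, the target must satisfy $\tau_j\in[0,1]$ so that the remainder formula applies on the domain of $f$; we take this as implicit in the statement. Second, the sign of $h$ must be handled: for $P+1$ odd and $h<0$, the function $f^\star$ gives a negative error, but its absolute value still matches the bound, so no case split is needed beyond taking $|h|$.
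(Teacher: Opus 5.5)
Your proof is correct and matches the paper's argument: the Lagrange form of the Taylor remainder gives the uniform upper bound, and the extremal polynomial $f^\star(\tau)=\frac{L}{(P+1)!}(\tau-\tau_k)^{P+1}$, whose order-$P$ Taylor polynomial at $\tau_k$ vanishes, attains it. Your remarks on requiring $\tau_j\in[0,1]$ and on reading the bound as $|(j-k)\delta_t|^{P+1}$ when $j<k$ are sensible bookkeeping that the paper leaves implicit.
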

With the worst-case error, we locate the root cause that is detrimental to the forecasting accuracy of the Taylor method particularly when the speedup ratio $\eta$ is high: the error will be dramatically amplified as the forecasting step size $\tau_j-\tau_k$ increases, which leads to degraded sample quality for high speedups. This is due to the fact that the Taylor forecaster heavily relies on the features of the few adjacent timesteps as required by the local Taylor expansion, which is also highly ineffective in capturing the global, long-range spectral features produced along the entire sampling trajectory. Fig.~\ref{fig:video-qualitative-compare} illustrates such an effect that the Taylor forecaster exaggerates the local sharp details while failing to capture the spectral features that are important to the overall semantics.

 Realizing these clear limitations of the Taylor forecaster, we introduce \emph{spectral feature forecasters}, which address the pitfalls above by switching to the spectral domain that models the sampling trajectory \emph{globally}. At its core lies Chebyshev polynomials, a powerful tool that serves as a set of spectral bases for the temporal functions of our interest.

\noindent\textbf{Chebyshev polynomials}. Chebyshev polynomials of the first kind~\cite{mason2002chebyshev} are defined by the following recurrence,
\begin{align}
    T_m(\tau)=2\tau T_{m-1}(\tau)-T_{m-2}(\tau),
\end{align}
with the base $T_0(\tau)=1$ and $T_1(\tau)=\tau$ for $\tau\in[-1,1]$. An important property of the Chebyshev polynomials is that they form a set of orthonormal basis for the function space, such that any function can be represented as a weighted sum of the Chebyshev polynomials:
\begin{theorem}[Universality of Chebyshev Polynomials]
    Let $f:[-1,1]\!\to\!\sR$ and let $E_M(f):=\inf_{\deg(p)\le M}\|f-p\|_\infty$.
    If $f$ extends analytically to the Bernstein ellipse $E_\rho$ with parameter $\rho>1$ and $\|f\|_{E_\rho}\le B$, then for the truncated Chebyshev series $p_M$ of degree $M$,
\begin{equation}
\label{eq:bernstein}
\|f-p_M\|_\infty \ \le\ \frac{2B}{\rho-1}\,\rho^{-M}.
\end{equation}
\label{theo:universal}
\end{theorem}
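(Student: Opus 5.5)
The plan is the classical Bernstein-ellipse argument: bound the Chebyshev coefficients of $f$ by contour integration, then sum the tail of the series.

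\textbf{Step 1: reduce to a Laurent series on the circle.} Substitute $\tau=\cos\theta$, equivalently the Joukowski map $\tau=\tfrac12(z+z^{-1})$ with $|z|=1$. This gives $T_m(\tau)=\tfrac12(z^m+z^{-m})$. The function $F(z):=f(\tfrac12(z+z^{-1}))$ is then analytic on the annulus $\rho^{-1}<|z|<\rho$, since the Joukowski map sends this annulus onto the interior of $E_\rho$. On $|z|=r$ for any $1<r<\rho$ we have $|F|\le B$, because the image circle lies inside $E_\rho$. The Chebyshev coefficients $a_m=\frac{2}{\pi}\int_{-1}^1 f(\tau)T_m(\tau)(1-\tau^2)^{-1/2}\,\rd\tau$ (with the usual factor $\tfrac12$ for $m=0$) coincide with the Laurent coefficients of $F$, using the symmetry $F(z)=F(z^{-1})$:
\[
a_m=\frac{1}{\pi i}\oint_{|z|=1} F(z)\,z^{-m-1}\,\rd z .
\]

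\textbf{Step 2: coefficient decay.} Deform the contour to $|z|=r$ by Cauchy's theorem, which is legitimate because $F$ is analytic on the annulus. This yields $|a_m|\le 2Br^{-m}$ for every $r<\rho$. Letting $r\to\rho$ gives $|a_m|\le 2B\rho^{-m}$; here we use that $\|f\|_{E_\rho}\le B$ is understood as a bound on the closed ellipse, or as a supremum over the open one attained in the limit. This limiting step, and in particular getting the constant $2$ rather than $4$, is the main point needing care. I would first try the standard symmetric-contour computation from Trefethen's \emph{Approximation Theory and Approximation Practice}, Thm.~8.1.

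\textbf{Step 3: tail sum.} Since $f$ is analytic in a neighbourhood of $[-1,1]$, the Chebyshev series converges uniformly, with geometric decay by Step 2. Using $|T_m|\le1$ on $[-1,1]$,
\[
\|f-p_M\|_\infty\le\sum_{m>M}|a_m|\le 2B\sum_{m=M+1}^\infty\rho^{-m}=\frac{2B\rho^{-M}}{\rho-1},
\]
which is exactly Eq.~\ref{eq:bernstein}. The same estimate trivially bounds $E_M(f)$ as well, since $p_M$ has degree at most $M$.
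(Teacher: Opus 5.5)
Your proposal is correct and follows the same route as the paper, which simply quotes the standard coefficient bound $|a_k|\le 2B\rho^{-k}$ for $k\ge 1$ and then sums the geometric tail using $|T_k|\le 1$, exactly as in your Step 3; your Steps 1--2 supply the usual Joukowski-map and contour-shift derivation of that bound, which the paper takes for granted. Your worry about the constant is unnecessary: on $|z|=r$ the estimate $\frac{1}{\pi}\cdot 2\pi r\cdot B\,r^{-m-1}=2Br^{-m}$ gives the factor $2$ directly, and letting $r\to\rho$ finishes the argument.
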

Theorem~\ref{theo:universal} states a critical property that lies in the heart of our spectral forecaster: Chebyshev polynomial approximates the target function uniformly regardless of the forecasting step size. Specifically, the approximation is bounded by the degree $M$ as opposed to the step size, making our spectral forecaster much more robust to large step sizes enforced in high speedup scenarios.

\subsection{Spectral Diffusion Feature Forecasting}
With Chebyshev polynomials introduced, we are now ready to present the algorithm framework of our Spectral Feature Forecaster (\method) for accelerating diffusion sampling. Specifically, we propose to view each channel of the feature vector\footnote{Here feature $\rvh_t\in\sR^F$ refers to the \emph{fully flattened} network outputs.} as a function that evolves through time $t$, \emph{i.e.},
\begin{align}
    \rvh_{t}=[h_1(t),\cdots,h_F(t)]\in\sR^{F},
\end{align}
where $h_i(t):\sR\mapsto\sR$ is the function for the feature at channel $i$. For each $h_i(t)$, we leverage a set of Chebyshev polynomials up to degree $M$ to approximate it, yielding
\begin{align}
\label{eq:basic-form}
    h_i(t)=\sum_{m=0}^M c_{m,i} T_m(\tau),\quad\text{s.t.}\ \ \tau=g(t),
\end{align}
where $g(t):[0,1]\mapsto [-1,1]=2t-1$ is the timestep projection that maps diffusion timestep to $[-1,1]$ to match the support of Chebyshev polynomials, and $c_{m,i}\in\sR$ is the coefficient for the $m$-th polynomial. Since we have no knowledge beforehand of the coefficients, we propose to fit them in an online manner during the sampling process. 

\noindent\textbf{Online coefficient fitting.} For any timestep $t_j$, according to the protocol in Eq.~\ref{eq:cache-def} we have defined for $\gA$, we have maintained the feature cache $\sC_{t_j}$ which consists of the features and timesteps collected from previous actual network evaluations. In order to perform the fitting, we first construct the basis row vector as
\begin{align}
\label{eq:basis-row}
    \bm\phi(\tau_k)=[T_0(\tau_k),T_1(\tau_k),\cdots,T_M(\tau_k)]\in\sR^{M+1},
\end{align}
which evaluates the Chebyshev polynomials at $\tau_k=g(t_k)$ for all $t_k$ recorded in $\sC_{t_j}$. The design matrix $\bm\Phi_{t_j}$ is then instantiated as a stack of basis row vectors along time axis:
\begin{align}
\label{eq:design_matrix}
    \bm\Phi_{t_j}=[\bm\phi(g(t_k))]_{(\rvh_{t_k},t_k)\in\sC_{t_j}} \in\sR^{K\times (M+1)},
\end{align}
where $K=|\sC_{t_j}|$ is the total number of currently cached timesteps. Similarly, the feature matrix is built by piling up the cached features over time:
\begin{align}
    \rmH_{t_j}=[\rvh_{t_k}]_{(\rvh_{t_k},t_k)\in\sC_{t_j}}\in\sR^{K\times F}.
\end{align}
To obtain the coefficient matrix $\rmC_{t_j}\in\sR^{(M+1)\times F}$ that comprises the coefficients $c_{m,i}=\rmC_{t_j}[m,i]$ used in Eq.~\ref{eq:basic-form}, we solve the following ridge regression objective:
\begin{align}
\label{eq:ridge-problem}
    \rmC_{t_j}=\argmin_{\rmC} \| \bm\Phi_{t_j}\rmC - \rmH_{t_j}   \|_F^2 + \lambda \|\rmC\|_F^2,
\end{align}
where $\lambda\in\sR$ is the regularization strength. It is worth noting that the regularization term is critical in terms of enhancing stability and relieving overfitting, as we will demonstrate in our experiment. The problem in Eq.~\ref{eq:ridge-problem} has the following close-form solution:
\begin{align}
\label{eq:ridge-solve}
    \rmC_{t_j}={\underbrace{(\bm\Phi_{t_j}^\top\bm\Phi_{t_j}+\lambda\rmI)}_{\in\sR^{(M+1)\times(M+1)}}}^{-1}\bm\Phi_{t_j}^\top\rmH_{t_j},
\end{align}
which can be efficiently solved via Cholesky decomposition. Note that the matrix inversion term is of shape $(M+1)\times (M+1)$, which is negligible when the number of basis $M$ remains small, compared with the feature dimension $F$, which is the major compute bound of the solve in Eq.~\ref{eq:ridge-solve}.
The basis fitting for $\rmC_{t_j}$ can be naturally conducted in an online manner. Whenever a new latent feature is produced from $\bm\epsilon_\theta$, we update $\rmC_{t_j}$ by solving Eq.~\ref{eq:ridge-solve} and cache the updated coefficients, which will be leveraged to forecast the features at future timesteps.

\begin{algorithm}[t!]
\small
\caption{Adaptive Spectral Feature Forecaster}
\label{alg:ourmethod}
\textbf{Input:} Diffusion step $N$, timesteps $\{t_j \}_{j=1}^N$, denoiser $\bm\epsilon_\theta$, initial latent $\rvx_0$, diffusion solver $\mathrm{Solve}(\cdot)$.
\begin{algorithmic}[1]
\For {$j = 1,\cdots, N$}
\If {$t_j\in \sU$} \Comment{\textcolor{gray}{If using actual forward pass}} 
\State {$\rvh_{t_j}, \bm\epsilon_{t_j}\leftarrow \bm\epsilon_\theta(\rvx_{t_{j-1}}, t_{j-1})$ \ \ \ \ \ \ \Comment{\textcolor{gray}{Actual forward pass}}}
\State {$\sC_{t_j}\leftarrow \sC_{t_{j-1}}\cup \{ (\rvh_{t_j}, t_j) \}$ \Comment{\textcolor{gray}{Update feature cache}}}
\State {$\rmC_{t_j}\leftarrow\mathrm{Fit}(\sC_{t_j})$ \Comment{\textcolor{gray}{Eq.~\ref{eq:design_matrix}-\ref{eq:ridge-solve}}}}
\Else \Comment{\textcolor{gray}{If using our forecaster}}
\State {$\rmC_{t_j},\sC_{t_j}\leftarrow\rmC_{t_{j-1}},\sC_{t_{j-1}}$ \Comment{\textcolor{gray}{Maintain the cache}}}
\State {$\rvh_{t_j}\leftarrow\mathrm{Forecast}(\rmC_{t_j}, t_j)$ \Comment{\textcolor{gray}{Eq.~\ref{eq:pred}}}}
\State {$\bm\epsilon_{t_j}\leftarrow \tilde{\bm\epsilon} (\rvh_{t_j})$ \Comment{\textcolor{gray}{Obtain the score}}}
\EndIf
\State{$\rvx_{t_j}\leftarrow\mathrm{Solve}(\rvx_{t_{j-1}}, \bm\epsilon_{t_j}, t_{j-1}, t_j)$} \Comment{\textcolor{gray}{Solver step}}
\EndFor
\end{algorithmic}
% \vskip -0.1in
% \vspace{-5pt}
\end{algorithm}
% \vskip -0.1in

\noindent\textbf{Feature forecasting with fitted coefficients.} For the timesteps $t_j\in\sV$, the algorithm skips the network pass of $\bm\epsilon_\theta$ and inherits the cached coefficients $\rmC_{t_j}=\rmC_{t_{j-1}}$ to forecast the feature value at current timestep, following Eq.~\ref{eq:basic-form}:
\begin{align}
\label{eq:pred}
    \rvh_{t_j} = \bm\phi(g(t_j))\rmC_{t_j}.
\end{align}
The predicted $\rvh_{t_j}$ is then directly ensembled throughout the network to produce the estimated score $\bm\epsilon_{t_j}$ for the current step without any expensive full pass over the network.

\noindent\textbf{Overall procedure.} As depicted in Alg.~\ref{alg:ourmethod}, our~\method operates in an online fitting-then-forecasting manner. Specifically, for each sampling step $j$ looping through 1 to $N$, if $t_j\in\sU$, we perform one actual network pass through the denoiser $\bm\epsilon_\theta$ to obtain the latent features $\rvh_{t_j}$ as well as the output score $\bm\epsilon_{t_j}$. The features $\rvh_{t_j}$ will be cached and the Chebyshev coefficients $\rmC_{t_j}$ will be updated correspondingly by solving Eq.~\ref{eq:ridge-solve}. Otherwise, for $t_j\in\sV$, we spare one network evaluation by directly predicting the features via Eq.~\ref{eq:pred} using the previously computed coefficients.

\subsection{Analysis and Extensions}
\label{sec:analysis_and_extensions}
In this subsection, we provide more detailed discussions on~\method, including time and memory complexity analysis. We further propose several empirical extensions of~\method that further enhances its practical performance.

\noindent\textbf{Error analysis.} Theorem~\ref{theo:our-error} below shows an error bound of \method, which is an extention of Theorem~\ref{theo:universal}. Notably, it does not depend on the step size $\tau_j - \tau_k$ as in the Taylor forecaster, which is highly beneficial.
\begin{theorem}[Error Bound of \method]
\label{theo:our-error}
Using the notation in Sec.~\ref{sec:method}, fix a channel $i$ and write
$f(\tau) = h_i(t)$ with $\tau = g(t)\in[-1,1]$.
Assume $f$ extends analytically to the Bernstein ellipse $E_\rho$ with $\rho>1$
and $\|f\|_{E_\rho} \le B$.
Let $p_M$ be the degree-$M$ Chebyshev truncation of $f$ and $\varepsilon_M := \|f - p_M\|_\infty \le \frac{2B}{\rho - 1}\,\rho^{-M}$ as in Theorem~\ref{theo:universal}. For a forecast step $t_j$ with cache $\sC_{t_j}$, let $\widehat h_i(t_j)$ be the \method\ predictor at time $t_j$ obtained
from Eq.~\ref{eq:pred} using the fitted coefficients $\rmC_{t_j}$
from Eq.~\ref{eq:ridge-solve}. Then for any forecast time $t_j$ with $\tau_j = g(t_j)$,
\small{
$$
\begin{aligned}
& \bigl|\,h_i(t_j) - \widehat h_i(t_j)\,\bigr| \\
\;\le\;
&\varepsilon_M
\left(
1 + \frac{(M+1)K}{\sigma_{\min}(\bm\Phi_{t_j})^2 + \lambda}
\right)
+
\frac{\lambda\,\sqrt{M+1}}{\sigma_{\min}(\bm\Phi_{t_j})^2 + \lambda}
\cdot
\frac{2B}{\sqrt{1 - \rho^{-2}}},
\end{aligned}
$$}
where $\sigma_{\min}(\bm\Phi_{t_j})$ is the smallest singular value of
$\bm\Phi_{t_j}$.
    
\end{theorem}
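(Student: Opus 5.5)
We split the error at $\tau_j$ by inserting the Chebyshev truncation $p_M(\tau)=\bm\phi(\tau)\mathbf{c}^\ast$, where $\mathbf{c}^\ast\in\sR^{M+1}$ collects the true Chebyshev coefficients $c^\ast_m$ of $f$ up to degree $M$. Writing $\widehat{\mathbf{c}}$ for the $i$-th column of $\rmC_{t_j}$, we have
\[
|f(\tau_j)-\widehat h_i(t_j)| \le |f(\tau_j)-p_M(\tau_j)| + |\bm\phi(\tau_j)(\mathbf{c}^\ast-\widehat{\mathbf{c}})| \le \varepsilon_M + \|\bm\phi(\tau_j)\|_2\,\|\mathbf{c}^\ast-\widehat{\mathbf{c}}\|_2 .
\]
Since $|T_m(\tau)|\le 1$ on $[-1,1]$, we get $\|\bm\phi(\tau_j)\|_2\le\sqrt{M+1}$. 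This is the only place the forecast location enters, so the final bound will not depend on $\tau_j-\tau_k$. That independence is the qualitative point of the theorem.

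Next we analyze the coefficient error through the closed form in Eq.~\ref{eq:ridge-solve}. Write the cached data as $\mathbf{y}=\bm\Phi\mathbf{c}^\ast+\mathbf{r}$, where $\mathbf{r}_k=f(\tau_k)-p_M(\tau_k)$. Hence $\|\mathbf{r}\|_\infty\le\varepsilon_M$ and $\|\mathbf{r}\|_2\le\sqrt K\varepsilon_M$. Let $A=\bm\Phi^\top\bm\Phi+\lambda\rmI$. Then
\[
\widehat{\mathbf{c}}-\mathbf{c}^\ast = A^{-1}\bm\Phi^\top\mathbf{y}-\mathbf{c}^\ast = A^{-1}\bm\Phi^\top\mathbf{r} - \lambda A^{-1}\mathbf{c}^\ast ,
\]
using $A^{-1}\bm\Phi^\top\bm\Phi = \rmI-\lambda A^{-1}$. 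We have $\|A^{-1}\|_2 = 1/(\sigma_{\min}(\bm\Phi)^2+\lambda)$.

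\textbf{Bias term.} This gives $\lambda\|A^{-1}\mathbf{c}^\ast\|_2\le \lambda\|\mathbf{c}^\ast\|_2/(\sigma_{\min}^2+\lambda)$. For $\|\mathbf{c}^\ast\|_2$ we use the standard Bernstein-ellipse coefficient decay $|c^\ast_m|\le 2B\rho^{-m}$, which underlies Theorem~\ref{theo:universal}. Summing the squares gives $\|\mathbf{c}^\ast\|_2\le 2B/\sqrt{1-\rho^{-2}}$. Multiplying by $\sqrt{M+1}$ reproduces the second term of the statement exactly.

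\textbf{Variance term.} This is the main obstacle, because matching the stated constant $(M+1)K\varepsilon_M/(\sigma_{\min}^2+\lambda)$ after multiplication by $\sqrt{M+1}$ requires $\|\bm\Phi^\top\mathbf{r}\|_2\le\sqrt{M+1}\,K\varepsilon_M$. I would get this crudely rather than through the sharper $\sigma_{\max}$ route. Each entry of $\bm\Phi$ has modulus at most $1$, so each coordinate satisfies $|(\bm\Phi^\top\mathbf{r})_m|\le\sum_k|\mathbf{r}_k|\le K\varepsilon_M$. Therefore $\|\bm\Phi^\top\mathbf{r}\|_2\le\sqrt{M+1}\,K\varepsilon_M$, and
\[
\sqrt{M+1}\,\|A^{-1}\bm\Phi^\top\mathbf{r}\|_2 \le \frac{(M+1)K\varepsilon_M}{\sigma_{\min}^2+\lambda}.
\]
Adding the three pieces gives the claimed inequality.

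The remaining care points are as follows. Here $\sigma_{\min}$ should be read as the smallest of the $M+1$ singular values; if $K<M+1$ it is $0$, and ridge still keeps $A$ invertible. The coefficient bound $|c^\ast_m|\le 2B\rho^{-m}$ should be cited, or derived from the contour-integral representation of $c^\ast_m$ on $E_\rho$. The same estimate applied to the tail yields Eq.~\ref{eq:bernstein}, so it is consistent with Theorem~\ref{theo:universal}.
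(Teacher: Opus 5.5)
Your proof is correct and follows the paper's argument essentially step for step. It uses the same observation model $\mathbf{y}=\bm\Phi\mathbf{c}^\ast+\mathbf{r}$, the same identity $A^{-1}\bm\Phi^\top\bm\Phi=\rmI-\lambda A^{-1}$, and the same three-term triangle inequality with $\|\bm\phi(\tau_j)\|_2\le\sqrt{M+1}$, $\|A^{-1}\|_2\le 1/(\sigma_{\min}^2+\lambda)$ and the Bernstein coefficient decay; the only cosmetic difference is that you bound $\|\bm\Phi^\top\mathbf{r}\|_2$ coordinatewise, whereas the paper sums $\|\bm\phi(\tau_k)\|_2\,|E_k|$ over rows, and both give $K\sqrt{M+1}\,\varepsilon_M$.
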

\noindent\textbf{Complexity.} For time complexity, the fitting in Eq.~\ref{eq:ridge-solve} is of $\gO(K(M+1)^2+K(M+1)F+(M+1)^3+(M+1)^2F)$, where $K$ is the number of cached timesteps, $M$ is the degree of Chebyshev basis, and $F$ is the latent feature dimension. Empirically, we have $M << K << F$, making $\gO(K(M+1)F)$ the dominant term. Compared with the Taylor forecaster, which is of complexity $\gO(PF)$ where $P$ is the Taylor expansion order,~\method only incurs marginal compute overhead, which is only of magnitude $K(M+1)/P$. The forecast operation in Eq.~\ref{eq:pred} is merely $\gO((M+1)F)$. Practically, both $K$ and $M$ are particularly small, making the overhead negligible, especially compared with the extremely computationally intensive operations in the actual denoiser forward passes. Regarding the memory, we only need to additionally preserve the coefficients matrix $\rmC$ besides the features in the cache $\sC$, which leads to a total of $\gO((M+1)F+KF)$ memory occupation.

\begin{table*}[t!]
  \centering
  \setlength{\tabcolsep}{3pt}
  \caption{Benchmark results of \textbf{text-to-image generation} task on DrawBench with Flux and Stable Diffusion 3.5-Large. We use 50 steps as the reference ($\dagger$). Our~\method achieves higher speedup while maintaining better sample quality across two speedup scenarios consistently.}
  \vskip -0.05in
  \newcommand{\splitheader}[1]{\begin{tabular}{@{}c@{}}#1\end{tabular}}
  \resizebox{\linewidth}{!}{
    \begin{tabular}{lcccccccccccccc}
    \toprule
          & \multicolumn{7}{c}{\textbf{FLUX.1}~\cite{flux2024}}                & \multicolumn{7}{c}{\textbf{Stable Diffusion 3.5-Large}~\cite{esser2024scaling}} \\
          \cmidrule(lr){2-8}\cmidrule(lr){9-15}
          & \multicolumn{2}{c}{Acceleration} & \multicolumn{3}{c}{Quality} & \multirow{2}[1]{*}{\splitheader{Image\\Reward$\uparrow$}} & \smash{\raisebox{-2ex}{CLIP$\uparrow$}} & \multicolumn{2}{c}{Acceleration} & \multicolumn{3}{c}{Quality} & \multirow{2}[1]{*}{\splitheader{Image\\Rewad$\uparrow$}} & \smash{\raisebox{-2ex}{CLIP$\uparrow$}} \\
           \cmidrule(lr){2-3}\cmidrule(lr){4-6} \cmidrule(lr){9-10}\cmidrule(lr){11-13}
          & Latency(s) $\downarrow$ & Speedup$\uparrow$ & PSNR$\uparrow$  & SSIM$\uparrow$  & LPIPS$\downarrow$ &       & & Latency(s)$\downarrow$ & Speedup$\uparrow$ & PSNR$\uparrow$  & SSIM$\uparrow$  & LPIPS$\downarrow$  &  \\
    \midrule
    50 steps$^\dagger$ &   26.03    &   1.00    &   -    &     -  &    -   &   1.00 &  27.49    &  25.05     &   1.00    &   -    &   -    &    -   &  1.05   &  28.66 \\
    
    25 steps &  13.23     &  1.97     &  18.11    &  0.752     &  0.322     &  1.01     &  27.58     
    
    &   12.67    &  1.98    &   12.41    &   0.593    &    0.464   & 1.02     &  28.78  \\

    15 steps &  8.04     &  3.24     &  15.77    &  0.673     &  0.432     &  1.00     &  27.56     
    
    &   7.71    &  3.25    &   10.51    &   0.453    &    0.595   &     0.85     &  28.75  \\
    
    \midrule
    
    FORA $(\mathcal{N}=4)$~\cite{selvaraju2024fora}  &   8.40    &  3.19     &  15.15     &  0.651     &  0.454     &  0.97     &  27.55     
    
    &  8.63     &  2.90     &   9.54    &  0.437     &   0.606    &  0.27     &  27.03 \\
    
    ToCa $(\mathcal{N}=4, \mathcal{R}=0.9)$~\cite{zou2024accelerating}  &   16.93    &  1.58     &  17.18     &  0.724     &  0.362     &  \textbf{1.03}     &  27.63     
    
    &   17.27    &  1.45    &  10.55   &  0.483     &   0.557    &  0.51    &  27.19 \\
           TeaCache $(\delta=0.4)$~\cite{liu2024timestep} &  10.66     &  2.44    &   18.70    &  0.762     &   0.311    &  \textbf{1.03}     &  \textbf{27.68}     
    &   10.75   &  2.33     &   11.89  &   0.527    &   0.512   &   0.78   &  27.46 \\
    
    TaylorSeer $(\mathcal{N}=4, \mathcal{O}=1)$~\cite{liu2025taylor} &   8.55    &  3.13     &  22.31     &  0.841     &  \textbf{0.215}     &  0.99     &  27.60     
    &   8.75    &  2.86     &   13.13    &  0.605     &   0.428    &  0.82     &  27.91 \\
    
    TaylorSeer $(\mathcal{N}=4, \mathcal{O}=2)$~\cite{liu2025taylor} &   8.59    &  3.03     &  20.76     &  0.812     &  0.247     &  1.02     &  27.61     
    &   8.70    &  2.88     &   11.12    &  0.513     &   0.529    &   0.44    &  27.24 \\
    \cellcolor{gray!15}{\method ($\alpha=0.75$)} & \cellcolor{gray!15}{\textbf{7.73}}      &  \cellcolor{gray!15}{\textbf{3.47}}     &   \cellcolor{gray!15}{\textbf{24.32}}    &  \cellcolor{gray!15}{\textbf{0.854}}     &   \cellcolor{gray!15}{0.217}    &   \cellcolor{gray!15}{0.99}    &   \cellcolor{gray!15}{27.65}    
    
    &   \cellcolor{gray!15}{\textbf{7.82}}      &  \cellcolor{gray!15}{\textbf{3.21}}     &   \cellcolor{gray!15}{\textbf{17.83}}    &  \cellcolor{gray!15}{\textbf{0.743}}     &   \cellcolor{gray!15}{\textbf{0.305}}    &   \cellcolor{gray!15}{\textbf{0.97}}    &   \cellcolor{gray!15}{\textbf{28.61}} \\
    
    \midrule
    
    FORA $(\mathcal{N}=6)$~\cite{selvaraju2024fora}  &   6.49    &  4.13     &  14.33     &  0.613     &  0.505     &  0.85     &  27.30     
    
    &   6.73   &  3.72    &   9.05   &  0.389    &   0.687   &  0.02    &  26.19 \\
    
    ToCa $(\mathcal{N}=6, \mathcal{R}=0.9)$~\cite{zou2024accelerating}  &   13.38    &  1.95     &  16.49     &  0.666     &  0.439     &  0.99     &  27.61     
    
    &   13.99    &  1.79     &   9.53    &  0.446     &   0.611    &  0.17     &  26.33 \\
        TeaCache $(\delta=0.8)$~\cite{liu2024timestep} &  6.75     &  3.85     &   16.71    &  0.695     &   0.406    &  \textbf{1.00}     &  27.63     
    &   7.17   &  3.49     &   10.04  &   0.472    &   0.589   &   0.39   &  27.08 \\
    TaylorSeer $(\mathcal{N}=6, \mathcal{O}=1)$~\cite{liu2025taylor} &  6.48     &  4.14     &   20.24    &  0.785     &   0.294    &  \textbf{1.00}     &  27.60     
    &   6.78    &  3.69     &   11.18    &  0.505     &   0.535    &  0.32     &  27.27 \\
    TaylorSeer $(\mathcal{N}=6, \mathcal{O}=2)$~\cite{liu2025taylor} &  6.52     &  3.99     &   17.41    &  0.708     &   0.389    &  0.99     &  27.39     
    &   6.76    &  3.71     &   9.36    &  0.429     &   0.636    &   0.10    &  25.90 \\
    \cellcolor{gray!15}{\method $(\alpha=3.0)$} & \cellcolor{gray!15}{\textbf{5.60}}      &  \cellcolor{gray!15}{\textbf{4.79}}     &   \cellcolor{gray!15}{\textbf{22.21}}    &  \cellcolor{gray!15}{\textbf{0.788}}     &   \cellcolor{gray!15}{\textbf{0.261}}    &   \cellcolor{gray!15}{\textbf{1.00}}    &   \cellcolor{gray!15}{\textbf{27.64}}    
    
    &   \cellcolor{gray!15}{\textbf{5.81}}     &  \cellcolor{gray!15}{\textbf{4.32}}     &   \cellcolor{gray!15}{\textbf{15.68}}    &  \cellcolor{gray!15}{\textbf{0.620}}     &    \cellcolor{gray!15}{\textbf{0.430}}   &  \cellcolor{gray!15}{\textbf{0.82}}    &   \cellcolor{gray!15}{\textbf{28.00}} \\
    
    \bottomrule
    \end{tabular}%
    }
    \vskip -0.1in
  \label{tab:image-maintable}%
\end{table*}%

\noindent\textbf{Caching the final block.} The original Taylor forecaster~\cite{liu2025taylor} maintains a cache for \emph{each modules} in attention blocks, which introduces $L$ times more memory and compute overhead where $L$ is the number of blocks. In practice, we find the layer-wise design redundant, and we instead only instantiate~\method for the output of the final attention block. Through extensive experiments, we verify that this strategy consistently yields on par or even better sample quality compared with layer-wise caching, while significantly reducing compute and memory overhead.

\noindent\textbf{Adaptive scheduling.} Since the diffusion sampling procedure in Alg.~\ref{alg:ourmethod} is indeed performed iteratively from $t=0$ to 1, the approximation errors of the forecaster induced in early diffusion steps will accumulate in the later steps, significantly amplifying the final error. To this end, we also introduce an adaptive timestep scheduler that encourages more actual network passes at the early sampling stage and gradually increases the usage of the forecaster at later diffusion steps. Specifically, for a total of $N$ diffusion steps, we select $\sU=\{\tau_j: j=\floor{\alpha \frac{r(r+1)}{2}}, r\in\sN^+, 1\leq j\leq N\}$ where $\alpha\geq 0$ is the slope of the time interval between adjacent timesteps in $\sU$. More explanations are in the Appendix.

\section{Experiments}
\label{sec:exp}
In this section, we benchmark our spectral feature forecaster against existing cache-then-predict approaches on both text-to-image generation (Sec.~\ref{sec:exp-t2i}) and text-to-video generation (Sec.~\ref{sec:exp-t2v}). Furthermore, we ablate our key designs and verify the efficacy of each component in Sec.~\ref{sec:exp-ablation}. 

\begin{figure}[t!]
\vskip -0.1in
    \centering
    \includegraphics[width=0.99\linewidth]{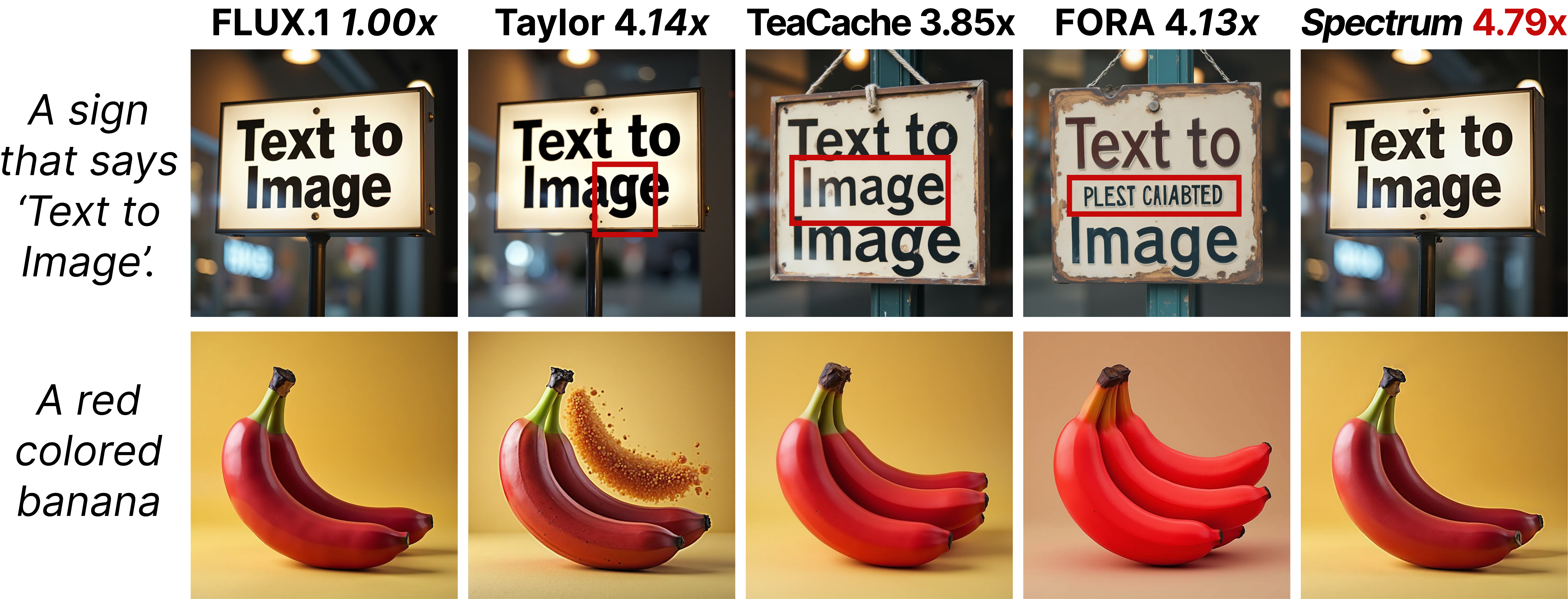}
    \vskip -0.1in
    \caption{\textbf{Qualitative comparison on text-to-image generation} using FLUX.1.~\method aligns consistently with the 50-step reference while accelerating it by a factor of 4.79$\times$. Other baselines show noticeable degradation in color and prompt consistency.}
    \label{fig:image-qualitative-compare}
    \vskip -0.25in
\end{figure}

\subsection{Accelerating Text-to-Image Diffusion Models}
\label{sec:exp-t2i}
\noindent\textbf{Experiment setup.} For text-to-image generation, we benchmark on two state-of-the-art image diffusion models, FLUX.1-dev~\cite{flux2024} and Stable Diffusion 3.5-Large~\cite{esser2024scaling}. We by default adopt 50 sampling steps as the reference, following established configurations. Experiments are performed on NVIDIA A100 GPUs for this task. We employ DrawBench~\cite{saharia2022photorealistic} as the prompt suite which contains 200 prompts, and we set the image resolution to 1024$\times$1024 for both models. We by default set $\lambda=0.1$ and $M=4$ for~\method with detailed ablations presented in Sec.~\ref{sec:exp-ablation}.

\noindent\textbf{Baselines.}  We compare~\method against the following baselines: \emph{$\kappa$ steps}, where we naively reduce the sampling step to $\kappa\leq 50$; \emph{FORA}~\cite{selvaraju2024fora} and \emph{ToCa}~\cite{zou2024accelerating}, which rely on direct cache reuse; \emph{TeaCache}, which employs a dynamic caching scheduling; and \emph{TaylorSeer}~\cite{liu2025taylor}, which predicts future cache values using Taylor expansion. We comprehensively benchmark the methods in two scenarios with different speedups controlled by their hyperparameters $\gN$ and $\alpha$. In particular, $\gN=4$ and $\gN=6$ correspond to 16 and 12 network evaluations, while $\alpha=0.75$ and $\alpha=3.0$ imply 14 and 10 network evaluations, respectively.

\noindent\textbf{Metrics.} We report the sampling latency (seconds per image) and the empirical speedup compared with the original 50-step sampler. For image quality, we first evaluate PSNR, SSIM, LPIPS between the generated images for each approach and the reference images generated by the 50-step sampler. We also include the ImageReward~\cite{xuImageRewardLearningEvaluating2023} and CLIP Score~\cite{hessel2021clipscore} of the generated images.

% Table generated by Excel2LaTeX from sheet 'Sheet1'
\begin{table*}[t!]
  \centering
\setlength{\tabcolsep}{5pt}
  \caption{Benchmark results of \textbf{text-to-video generation} task on VBench with Wan2.1-14B and HunyuanVideo. We use 50 steps as the reference ($\dagger$). Our~\method achieves higher speedup while maintaining better sample quality across two speedup scenarios consistently.}
  \vskip -0.1in
  \newcommand{\splitheader}[1]{\begin{tabular}{@{}c@{}}#1\end{tabular}}
  \resizebox{\linewidth}{!}{
    \begin{tabular}{lcccccccccccc}
    \toprule
          & \multicolumn{6}{c}{\textbf{Wan2.1-14B}~\cite{wan2.1}}                & \multicolumn{6}{c}{\textbf{HunyuanVideo}~\cite{kong2024hunyuanvideo}} \\
          \cmidrule(lr){2-7}\cmidrule(lr){8-13}
          & \multicolumn{2}{c}{Acceleration} & \multicolumn{3}{c}{Quality} & \multirow{2}[1]{*}{\splitheader{VBench\\Quality$\uparrow$}} & \multicolumn{2}{c}{Acceleration} & \multicolumn{3}{c}{Quality} & \multirow{2}[1]{*}{\splitheader{VBench\\Quality$\uparrow$}} \\
           \cmidrule(lr){2-3}\cmidrule(lr){4-6} \cmidrule(lr){8-9}\cmidrule(lr){10-12}
          & Latency(s) $\downarrow$ & Speedup$\uparrow$ & PSNR$\uparrow$  & SSIM$\uparrow$  & LPIPS$\downarrow$ &       & Latency(s)$\downarrow$ & Speedup$\uparrow$ & PSNR$\uparrow$  & SSIM$\uparrow$  & LPIPS$\downarrow$  &  \\
    \midrule
    50 steps$^\dagger$ &   486.12    &   1.00    &   -    &     -  &    -   &   83.15    &  378.30     &   1.00    &   -    &   -    &    -   &  84.61 \\
    25 steps &  246.30     &  1.97     &  14.94    &  0.459     &  0.481     &  81.74     &   193.19    &  1.96    &   19.64    &   0.673    &    0.376   & 84.78  \\
    15 steps &   150.50    &  3.23     &  13.92     &   0.397    &  0.546     &  80.77     &   119.58    &  3.16     &   17.40    &  0.620     &   0.459    & 83.21 \\
    \midrule
    FORA $(\gN=4)$~\cite{selvaraju2024fora}  &   155.80    &   3.12    &  14.47     &  0.426     &   0.532    &  80.53     &   122.42    &  3.09     &  18.24     &   0.663    &    0.427   &  83.34 \\
    ToCa $(\gN=4,\gR=0.9)$~\cite{zou2024accelerating}  &   313.62    &   1.55    &   16.88    &    0.537   &  0.410     &  82.18     &  247.25     &   1.53    &   21.26    &  0.732     &  0.357     & 83.79 \\
    TeaCache~$(\delta=0.2)$~\cite{liu2024timestep} &   164.78    &  2.95     &  19.13     &   0.628   &  0.347     &   82.59    &   130.44    &  2.90     &   23.88   &  0.783    &   0.270    &  84.08 \\
        TaylorSeer~$(\gN=4,\gO=1)$~\cite{liu2025taylor} &   161.50    &  3.01     &  19.46     &  0.660     &  0.299     &  82.74     &   127.55    &  2.96     &   24.73    &  0.805     &   0.243    &  84.03 \\
    \cellcolor{gray!15}{\method $(\alpha=0.75)$} & \cellcolor{gray!15}{\textbf{143.03}}      &  \cellcolor{gray!15}{\textbf{3.40}}     &   \cellcolor{gray!15}{\textbf{22.78}}    &  \cellcolor{gray!15}{\textbf{0.749}}     &   \cellcolor{gray!15}{\textbf{0.222}}    &   \cellcolor{gray!15}{\textbf{82.80}}    &  \cellcolor{gray!15}{\textbf{112.59}}     &  \cellcolor{gray!15}{\textbf{3.36}}     &   \cellcolor{gray!15}{\textbf{27.77}}    &  \cellcolor{gray!15}{\textbf{0.842}}     &    \cellcolor{gray!15}{\textbf{0.209}}   &  \cellcolor{gray!15}{\textbf{84.11}} \\
    \midrule
    FORA $(\gN=6)$~\cite{selvaraju2024fora}  &   120.92    &  4.02     &   13.02    &  0.353     &  0.598     &   80.82    &   95.77    & 3.95 &  17.29     &  0.594     &  0.475     & 83.35 \\
    ToCa $(\gN=6,\gR=0.9)$~\cite{zou2024accelerating}  &   250.57    &   1.94    &   15.10    &  0.488     &   0.453    &   81.14    &   199.10    &   1.90    &   19.48    &  0.656     &  0.438     & 83.62 \\
    TeaCache~$(\delta=0.25)$~\cite{liu2024timestep} &  127.59    &  3.81 &    16.53   &  0.557   &  0.390     &   81.85    &   102.52   &  3.69    &   21.42   &  0.725    &   0.334    &  83.68 \\
        TaylorSeer~$(\gN=6,\gO=1)$~\cite{liu2025taylor} &  123.49     &  3.94     &   17.24    &  0.585     &   0.367    &  81.38     &  97.83     &  3.86     &   22.27    &   0.740    &   0.303    & 83.40  \\
    \cellcolor{gray!15}{\method $(\alpha=3.0)$} & \cellcolor{gray!15}{\textbf{104.18}}      &  \cellcolor{gray!15}{\textbf{4.67}}     &   \cellcolor{gray!15}{\textbf{21.24}}    &  \cellcolor{gray!15}{\textbf{0.694}}     &   \cellcolor{gray!15}{\textbf{0.265}}    &   \cellcolor{gray!15}{\textbf{82.21}}    &  \cellcolor{gray!15}{\textbf{82.90}}     &  \cellcolor{gray!15}{\textbf{4.56}}     &   \cellcolor{gray!15}{\textbf{25.39}}    &  \cellcolor{gray!15}{\textbf{0.779}}     &    \cellcolor{gray!15}{\textbf{0.273}}   &  \cellcolor{gray!15}{\textbf{83.89}} \\
    \bottomrule
    \end{tabular}%
    }
  \label{tab:video-maintable}%
  \vskip -0.1in
\end{table*}%

\begin{figure}[t!]
\vskip -0.1in
    \centering
    \includegraphics[width=0.95\linewidth]{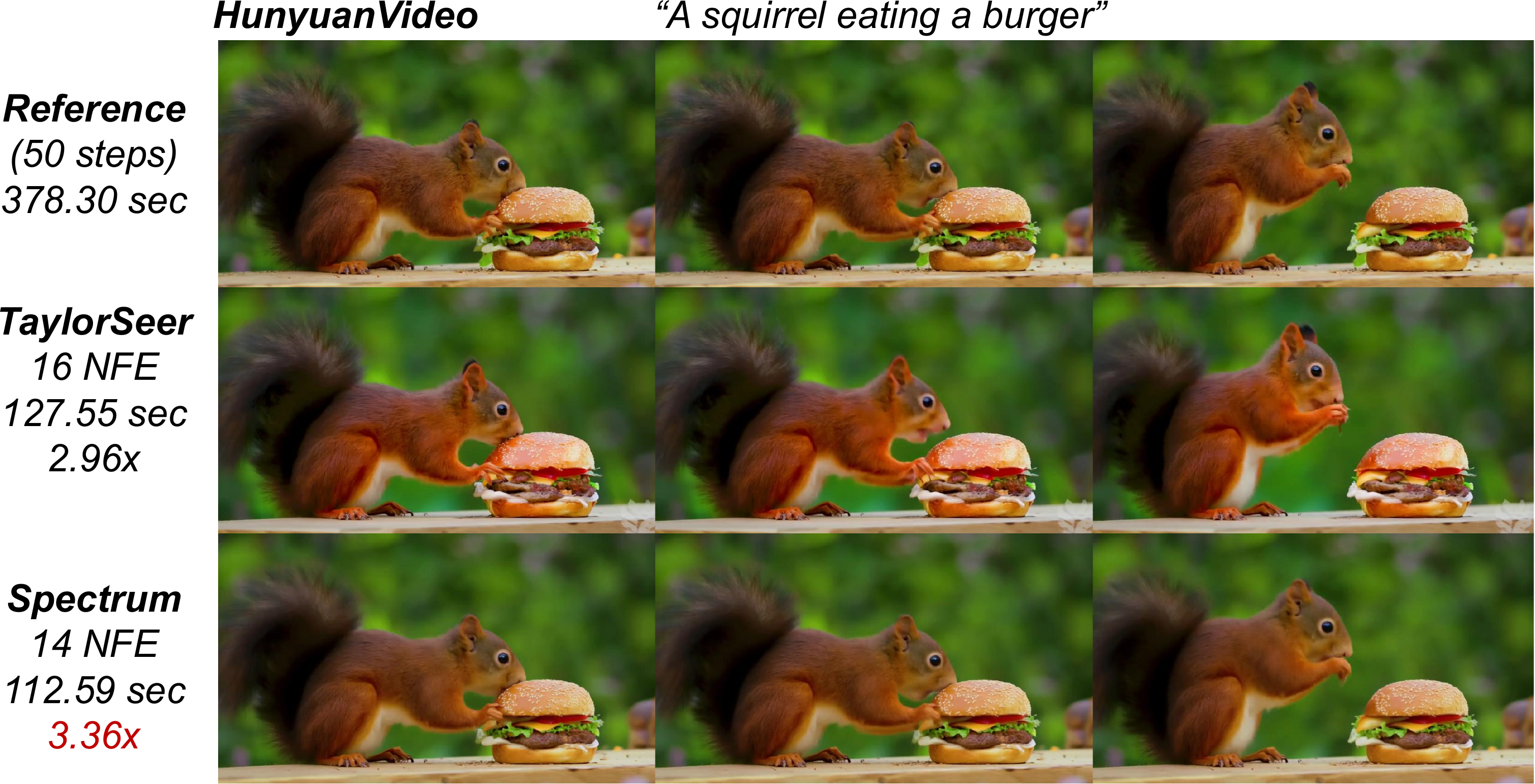}
    \vskip -0.1in
    \caption{\textbf{Qualitative comparisons} on HunyuanVideo.~\method achieves higher sample fidelity while delivering more speedup.}
    \vskip -0.25in
    \label{fig:hunyuan-compare}
\end{figure}

\begin{figure*}[t!]
    \centering
    \includegraphics[width=0.99\linewidth]{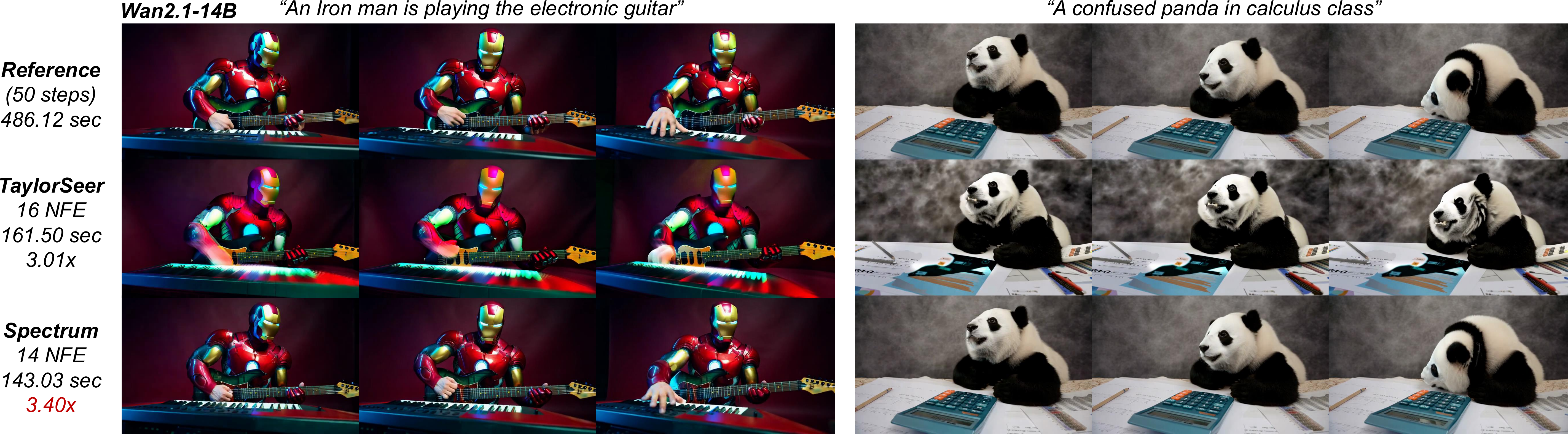}
    \vskip -0.05in
    \caption{\textbf{Qualitative comparison on text-to-video generation} using Wan2.1-14B.~\method aligns consistently with the high-quality 50-step reference using only 14 network evaluations, while TaylorSeer is slower and exhibits noticeable artifacts on character and background. }
    \vskip -0.25in
    \label{fig:video-qualitative-compare}
\end{figure*}
\noindent\textbf{Qualitative comparison.} We provide visualization of the generated videos on Hunyuan and Wan in Fig.~\ref{fig:hunyuan-compare} and~\ref{fig:video-qualitative-compare}, respectively (more in appendix). While TaylorSeer clearly exhibits unexpected artifacts on both the character and background,~\method consistently remains high-quality and aligns accurately with the reference 50-step samples.

\noindent\textbf{Results.} The quantitative results are presented in Table~\ref{tab:image-maintable}. Notably,~\method consistently achieves higher speedup while maintaining much better sample quality compared with all baselines, demonstrating the efficacy of our spectral feature forecaster. Furthermore, TaylorSeer encounters severe quality degradation in the higher speedup scenario ($\alpha=3.0$ or $\gN=6$) with remarkable drops in the quality metrics (\emph{e.g.}, 0.82 to 0.32 on ImageReward with SD3.5) due to the enlarged error in the approximation. Naively increasing the order of Taylorseer does not address the problem, which aligns with our analysis. In contrast, our~\method remains robust across all speedups, achieving much higher quality while using fewer network evaluations, thanks to the spectral forecasting approach with bounded error and better long-horizon behavior. Remarkably,~\method $(\alpha=3.0)$ achieves nearly the same PSNR with TaylorSeer $(\gN=4)$, while only using 10 network evaluations as opposed to 16.

\subsection{Accelerating Text-to-Video Diffusion Models}
\label{sec:exp-t2v}

% \looseness=-1
\noindent\textbf{Experiment setup.} We further evaluate ~\method on two advanced text-to-video diffusion models, Wan2.1-14B~\cite{wan2.1} and HunyuanVideo~\cite{kong2024hunyuanvideo}. We adopt VBench~\cite{huang2023vbench} as the benchmark suite and directly leverage their prompt set and evaluation protocol. The inference for all approaches is conducted on NVIDIA H800 GPUs. Besides PSNR, SSIM, and LPIPS, we also report the VBench Quality Score (\%) as a quantitative measurement of video quality.

\noindent\textbf{Results.} As depicted in Table~\ref{tab:video-maintable},~\method exhibits exceptional effectiveness on text-to-video generation, achieving up to 4.67$\times$ wallclock speedup with negligible effect on quality scores. While the quality drops even more significantly for the baselines, with TaylorSeer merely getting a PSNR of 17.24 on Wan2.1-14B, our~\method keeps producing high-fidelity video samples, reaching 21.24 PSNR with only 10 network passes. The superior quality obtained with extremely limited number of network evaluations again endorses our core design of modeling the latent features with spectral basis.

\subsection{Ablation Studies}
\label{sec:exp-ablation}

\noindent\textbf{The effect of regularization weight $\lambda$.} As depicted in Eq.~\ref{eq:ridge-solve}, the factor $\lambda$ serves as the regularization strength in the ridge regression problem. Here we investigate the empirical effect of $\lambda$ on FLUX.1 under different speedups by sweeping over $\alpha$, with results in Fig.~\ref{fig:ablate_lambda}. It is observed that both $\lambda=10^{-3}$ and $\lambda=10$ lead to detriment of image quality at each acceleration ratio, since a large $\lambda$ leads to underfitting of the features and hence larger forecasting error, while a small $\lambda$ adds insufficient diagonal regularization and incurs numerical instability when solving Eq.~\ref{eq:ridge-solve}.

\begin{figure}[t]  % [t] = top, [b] = bottom, [h] = here
  \centering
  \includegraphics[width=0.92\linewidth]{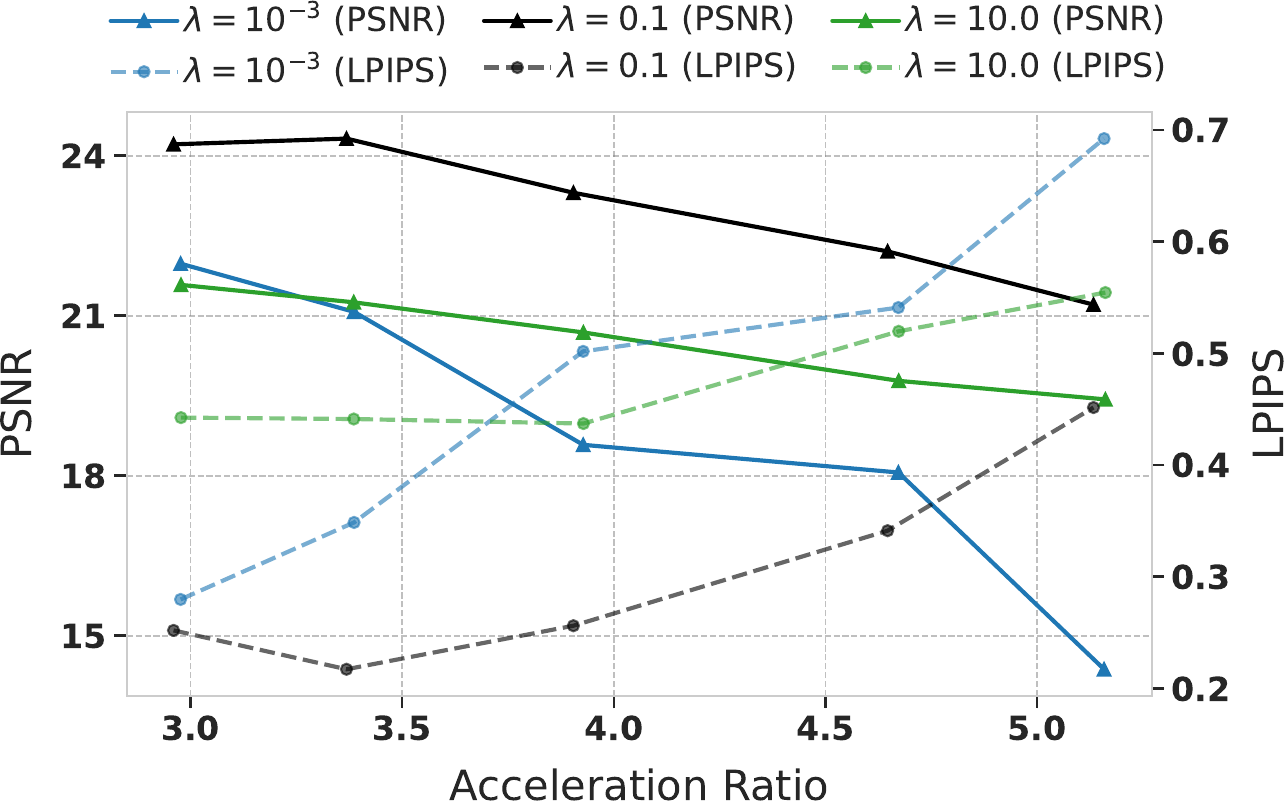}
  \vskip -0.1in
  \caption{Ablation study on the regularization weight $\lambda$.}
  \label{fig:ablate_lambda}
  \vskip -0.15in
\end{figure}

\noindent\textbf{The effect of adaptive scheduling.} We further study the efficacy of our proposed adaptive scheduling strategy, with results displayed in Table~\ref{tab:ablate_adaptive_scheduling}. Notably, using the same number of network evaluations, our adaptive scheduling $(\alpha=3.0)$ yields enhanced sample quality compared with fixed stepsize scheduling $(\gN=8)$ as evidenced by higher PSNR, SSIM, and Image Reward. The improvement remains consistent across both Taylor and our spectral forecaster, further validating our proposal. It is worth highlighting that~\method significantly promotes the sample quality over TaylorSeer regardless of the scheduling. For instance,~\method enhances PSNR from 17.24 to 20.65 on Wan2.1 even under the fixed scheduling $(\gN=8)$, which, again, verifies the efficacy and robustness of our spectral forecaster.

\begin{table}[t!]
  \centering
  \setlength{\tabcolsep}{4.5pt}
  \vskip -0.02in
  \caption{Ablation study of adaptive scheduling on SD3.5 and Wan.}
  \vskip -0.12in
  % \vskip -0.5in
    \resizebox{\linewidth}{!}{
    \begin{tabular}{lccccc}
    \toprule
          & \multicolumn{5}{c}{\textbf{Stable Diffusion 3.5-Large}~\cite{esser2024scaling}} \\
    % \midrule
    \cmidrule(lr){2-6}
    & Adaptive & PSNR$\uparrow$  & SSIM$\uparrow$  & LPIPS$\downarrow$ & Image Reward$\uparrow$ \\
    \midrule
    Taylor $(\gN=8)$ & $\times$    &   10.40    &   0.460    &  0.594     &  -0.22 \\
    Taylor $(\alpha=3.0)$ & $\checkmark$    &  13.25     &  0.585     &  0.454     & 0.65  \\
    \method $(\gN=8)$ & $\times$    &   11.34    &   0.501    &  0.552     &  0.23 \\
    \cellcolor{gray!15}{\method $(\alpha=3.0)$} & \cellcolor{gray!15}{$\checkmark$}     &   \cellcolor{gray!15}{\textbf{15.68}}    &  \cellcolor{gray!15}{\textbf{0.620}}     &   \cellcolor{gray!15}{\textbf{0.430}}    & \cellcolor{gray!15}{\textbf{0.82}} \\
    \midrule
          & \multicolumn{5}{c}{\textbf{Wan2.1-14B}~\cite{wan2.1}} \\
    % \midrule
     \cmidrule(lr){2-6}
     & Adaptive & PSNR$\uparrow$  & SSIM$\uparrow$  & LPIPS$\downarrow$ & VBench Score$\uparrow$ \\
    \midrule
    Taylor $(\gN=8)$    & $\times$     &   17.24    &  0.584     &   0.367    & 81.38  \\
     Taylor $(\alpha=3.0)$ & $\checkmark$     &  19.85     &  0.674     &  0.283     & 81.67 \\
    \method $(\gN=8)$ & $\times$     &  20.65     &  0.687     &  0.275     &  \textbf{82.32}   \\
    \cellcolor{gray!15}{\method $(\alpha=3.0)$} & \cellcolor{gray!15}{$\checkmark$}     &   \cellcolor{gray!15}{\textbf{21.24}}    &   \cellcolor{gray!15}{\textbf{0.694}}    &  \cellcolor{gray!15}{\textbf{0.265}}     & \cellcolor{gray!15}{82.21} \\
    \bottomrule
    \end{tabular}%
    }
  \label{tab:ablate_adaptive_scheduling}%
  \vskip -0.22in
\end{table}%

\noindent\textbf{Ablation on the degree of Chebyshev polynomials $M$.} Per our analysis, the degree of Chebyshev polynomials, $M$, is a key factor: A small $M$ will lead to imprecise modeling of the function $h_i(t)$ and thus incurs more forecasting error, while a large $M$ may bring unnecessary computation and memory overhead. The empirical observations on FLUX.1 in Fig.~\ref{fig:ablate_m} well align with our analysis. The sample quality is promoted as $M$ increases from 2 to 4, while further scaling up $M$ to 6 brings marginal improvement. We therefore adopt the choice of $M=4$ which is sufficiently effective for modeling the per-channel scalar temporal function $h_i(t)$.

\begin{figure}[t]  % [t] = top, [b] = bottom, [h] = here
  \centering
  \includegraphics[width=0.92\linewidth]{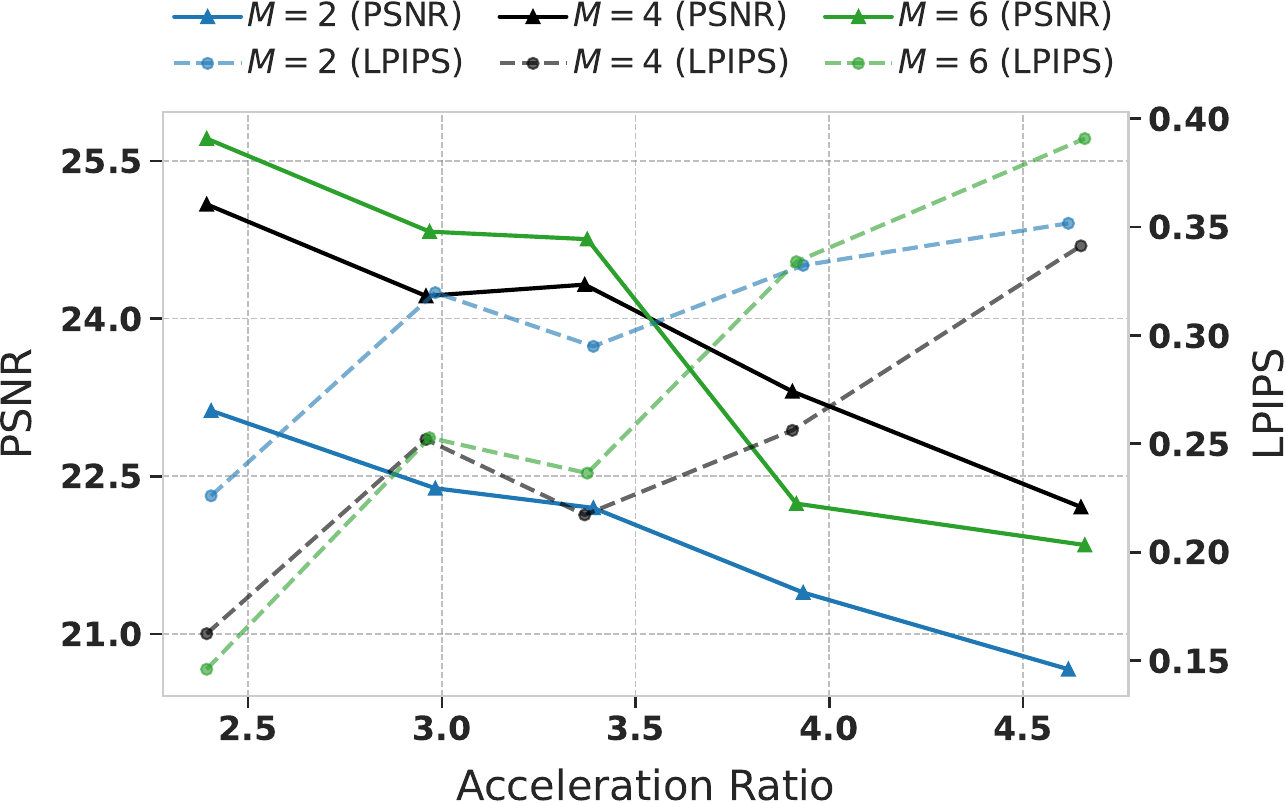}
  \vskip -0.1in
  \caption{Ablation on the degree of Chebyshev polynomials $M$.}
  \label{fig:ablate_m}
  \vskip -0.15in
\end{figure}

% Table generated by Excel2LaTeX from sheet 'Sheet1'
\begin{table}[t!]
  \centering
  \caption{Ablation study of the last-block-only caching strategy on FLUX.1 with $\gN=8$ for both Taylor and~\method.}
  \vskip -0.1in
      \resizebox{\linewidth}{!}{
    \begin{tabular}{lcccccc}
    \toprule
          & Last only & Latency(s)$\downarrow$ & PSNR$\uparrow$  & SSIM$\uparrow$  & LPIPS$\downarrow$ & Image Reward$\uparrow$ \\
    \midrule
    Taylor &   $\times$    &  7.36     &  17.93     &  0.718     &  0.375     & 0.99 \\
    Taylor &   $\checkmark$    &  5.45     &  18.61     &   0.738    &  0.356     &  0.99\\
    \method  &   $\times$    &  15.44     &   19.34    &   0.725    &   0.362    &  1.02 \\
    \cellcolor{gray!15}{\method}  &   \cellcolor{gray!15}{$\checkmark$}    & \cellcolor{gray!15}{5.59}      &  \cellcolor{gray!15}{19.66}     &   \cellcolor{gray!15}{0.741}    &   \cellcolor{gray!15}{0.341}  & \cellcolor{gray!15}{1.03}  \\
    \bottomrule
    \end{tabular}%
    }
  \label{tab:ablate_last_block}%
  \vskip -0.1in
\end{table}%

% Table generated by Excel2LaTeX from sheet 'Sheet1'
\begin{table}[t!]
\small
% \vskip -0.2in
  \centering
  \caption{RMSE between predicted latents and oracle on Wan2.1.}
  \vskip -0.1in
  \resizebox{0.99\linewidth}{!}{
    \begin{tabular}{lccccc}
    \toprule
           Diffusion step & 10    & 20    & 30    & 40    & 50 \\
    \midrule
     Taylor ($\gN=8$) & 0.0121 & 0.0303 & 0.0629 & 0.1226 & 0.2510 \\
    \cellcolor{gray!15}{\emph{Spectrum} ($\alpha=3.0$)} & \cellcolor{gray!15}{0.0040} &\cellcolor{gray!15}{0.0164} & \cellcolor{gray!15}{0.0358} & \cellcolor{gray!15}{0.0742} & \cellcolor{gray!15}{0.1674} \\
    \bottomrule
    \end{tabular}%
    }
    \vskip -0.2in
  \label{tab:latent_feature}%
\end{table}%

\noindent\textbf{The effect of direct last-block forecasting.} Empirically, we propose to only cache and forecast the output features from the last attention block. We validate such proposal in Table~\ref{tab:ablate_last_block} by controlling $\gN=8$. Specifically, compared with the original TaylorSeer, which performs caching and forecasting per module, operating only for the last block surprisingly yields even better performance while being much faster also more memory-efficient. We speculate that per-module forecasting incurs more frequent error accumulation when aggregating predicted features while the last-block-only modeling removes such redundancy.

\noindent\textbf{Feature-level analysis.} We evaluate the RMSE between the latents produced by the 50-step oracle and the forecasters at different diffusion steps in Table~\ref{tab:latent_feature}, which clearly shows that our spectral predictor produces diffusion features that are much closer to the oracle. More results are in Appendix.

\section{Conclusion}
\label{sec:conclusion}
We introduce~\method, a training-free diffusion sampling acceleration technique that caches and forecasts latent features in the denoiser. Our core idea is to  approximate latent features with Chebyshev polynomials, leading to a spectral forecaster with controlled error bound. Extensive experiments on image and video diffusion models demonstrate that \method can preserve high sample quality with much fewer network evaluations. \method takes a solid step toward high-throughput, real-time diffusion generation.

\clearpage

\noindent \textbf{Acknowledgments.}
We thank the anonymous reviewers for their feedback on improving the manuscript. This work was supported by ARO
(W911NF-21-1-0125), ONR (N00014-23-1-2159), and the CZ Biohub.

{
    \small
    \bibliographystyle{ieeenat_fullname}
    \bibliography{main}

@String(ICCV= {Int. Conf. Comput. Vis.})

@String(ICLR = {Int. Conf. Learn. Represent.})

@String(ICCV  = {ICCV})

@String(ICLR  = {ICLR})

@article{shih2024parallel,
  title={Parallel sampling of diffusion models},
  author={Shih, Andy and Belkhale, Suneel and Ermon, Stefano and Sadigh, Dorsa and Anari, Nima},
  journal={Advances in Neural Information Processing Systems},
  volume={36},
  year={2024}
}

@article{ho2020denoising,
  title={Denoising diffusion probabilistic models},
  author={Ho, Jonathan and Jain, Ajay and Abbeel, Pieter},
  journal={Advances in neural information processing systems},
  volume={33},
  pages={6840--6851},
  year={2020}
}

@inproceedings{han2025chords,
  title={CHORDS: Diffusion Sampling Accelerator with Multi-core Hierarchical ODE Solvers},
  author={Han, Jiaqi and Ye, Haotian and Li, Puheng and Xu, Minkai and Zou, James and Ermon, Stefano},
  booktitle={Proceedings of the IEEE/CVF International Conference on Computer Vision},
  pages={19386--19395},
  year={2025}
}

@article{sabour2025align,
  title={Align Your Flow: Scaling Continuous-Time Flow Map Distillation},
  author={Sabour, Amirmojtaba and Fidler, Sanja and Kreis, Karsten},
  journal={arXiv preprint arXiv:2506.14603},
  year={2025}
}

@misc{rivlin1974chebyshev,
  title={The Chebyshev Polynomials, Pure and Applied Mathematics},
  author={Rivlin, Theodore J and others},
  year={1974},
  publisher={Wiley--Interscience, New York}
}

@article{salimans2024multistep,
  title={Multistep distillation of diffusion models via moment matching},
  author={Salimans, Tim and Mensink, Thomas and Heek, Jonathan and Hoogeboom, Emiel},
  journal={Advances in Neural Information Processing Systems},
  volume={37},
  pages={36046--36070},
  year={2024}
}

@article{geng2025mean,
  title={Mean flows for one-step generative modeling},
  author={Geng, Zhengyang and Deng, Mingyang and Bai, Xingjian and Kolter, J Zico and He, Kaiming},
  journal={arXiv preprint arXiv:2505.13447},
  year={2025}
}

@book{mason2002chebyshev,
  title={Chebyshev polynomials},
  author={Mason, John C and Handscomb, David C},
  year={2002},
  publisher={Chapman and Hall/CRC}
}

@inproceedings{hessel2021clipscore,
  title={{CLIPScore:} A Reference-free Evaluation Metric for Image Captioning},
  author={Hessel, Jack and Holtzman, Ari and Forbes, Maxwell and Bras, Ronan Le and Choi, Yejin},
  booktitle={EMNLP},
  year={2021}
}

@InProceedings{huang2023vbench,
     title={{VBench}: Comprehensive Benchmark Suite for Video Generative Models},
     author={Huang, Ziqi and He, Yinan and Yu, Jiashuo and Zhang, Fan and Si, Chenyang and Jiang, Yuming and Zhang, Yuanhan and Wu, Tianxing and Jin, Qingyang and Chanpaisit, Nattapol and Wang, Yaohui and Chen, Xinyuan and Wang, Limin and Lin, Dahua and Qiao, Yu and Liu, Ziwei},
     booktitle={Proceedings of the IEEE/CVF Conference on Computer Vision and Pattern Recognition},
     year={2024}
 }

@inproceedings{
liu2023flow,
title={Flow Straight and Fast: Learning to Generate and Transfer Data with Rectified Flow},
author={Xingchao Liu and Chengyue Gong and qiang liu},
booktitle={The Eleventh International Conference on Learning Representations },
year={2023},
url={https://openreview.net/forum?id=XVjTT1nw5z}
}

@inproceedings{
lipman2023flow,
title={Flow Matching for Generative Modeling},
author={Yaron Lipman and Ricky T. Q. Chen and Heli Ben-Hamu and Maximilian Nickel and Matthew Le},
booktitle={The Eleventh International Conference on Learning Representations },
year={2023},
url={https://openreview.net/forum?id=PqvMRDCJT9t}
}

@article{kong2024hunyuanvideo,
  title={Hunyuanvideo: A systematic framework for large video generative models},
  author={Kong, Weijie and Tian, Qi and Zhang, Zijian and Min, Rox and Dai, Zuozhuo and Zhou, Jin and Xiong, Jiangfeng and Li, Xin and Wu, Bo and Zhang, Jianwei and others},
  journal={arXiv preprint arXiv:2412.03603},
  year={2024}
}

@inproceedings{
  song2021scorebased,
  title={Score-Based Generative Modeling through Stochastic Differential Equations},
  author={Yang Song and Jascha Sohl-Dickstein and Diederik P Kingma and Abhishek Kumar and Stefano Ermon and Ben Poole},
  booktitle={International Conference on Learning Representations},
  year={2021},
  url={https://openreview.net/forum?id=PxTIG12RRHS}
}

@article{saharia2022photorealistic,
  title={Photorealistic text-to-image diffusion models with deep language understanding},
  author={Saharia, Chitwan and Chan, William and Saxena, Saurabh and Li, Lala and Whang, Jay and Denton, Emily L and Ghasemipour, Kamyar and Gontijo Lopes, Raphael and Karagol Ayan, Burcu and Salimans, Tim and others},
  journal={Advances in neural information processing systems},
  volume={35},
  pages={36479--36494},
  year={2022}
}

@article{song2019generative,
  title={Generative modeling by estimating gradients of the data distribution},
  author={Song, Yang and Ermon, Stefano},
  journal={Advances in neural information processing systems},
  volume={32},
  year={2019}
}

@misc{peebles2023scalablediffusionmodelstransformers,
      title={Scalable Diffusion Models with Transformers}, 
      author={William Peebles and Saining Xie},
      year={2023},
      eprint={2212.09748},
      archivePrefix={arXiv},
      primaryClass={cs.CV},
      url={https://arxiv.org/abs/2212.09748}, 
}

@inproceedings{rombach2022high,
  title={High-resolution image synthesis with latent diffusion models},
  author={Rombach, Robin and Blattmann, Andreas and Lorenz, Dominik and Esser, Patrick and Ommer, Bj{\"o}rn},
  booktitle={Proceedings of the IEEE/CVF conference on computer vision and pattern recognition},
  pages={10684--10695},
  year={2022}
}

@article{wan2.1,
    title   = {Wan: Open and Advanced Large-Scale Video Generative Models},
    author  = {Wan Team},
    journal = {},
    year    = {2025}
}

@article{song2020denoising,
  title={Denoising diffusion implicit models},
  author={Song, Jiaming and Meng, Chenlin and Ermon, Stefano},
  journal={arXiv preprint arXiv:2010.02502},
  year={2020}
}

@article{karras2022elucidating,
  title={Elucidating the design space of diffusion-based generative models},
  author={Karras, Tero and Aittala, Miika and Aila, Timo and Laine, Samuli},
  journal={Advances in Neural Information Processing Systems},
  volume={35},
  pages={26565--26577},
  year={2022}
}

@inproceedings{song2023consistency,
  title={Consistency Models},
  author={Yang Song and Prafulla Dhariwal and Mark Chen and Ilya Sutskever},
  booktitle={International Conference on Machine Learning},
  year={2023},
}

@inproceedings{li2023q,
  title={Q-diffusion: Quantizing diffusion models},
  author={Li, Xiuyu and Liu, Yijiang and Lian, Long and Yang, Huanrui and Dong, Zhen and Kang, Daniel and Zhang, Shanghang and Keutzer, Kurt},
  booktitle={Proceedings of the IEEE/CVF International Conference on Computer Vision},
  pages={17535--17545},
  year={2023}
}

@article{kim2023consistency,
  title={Consistency trajectory models: Learning probability flow ode trajectory of diffusion},
  author={Kim, Dongjun and Lai, Chieh-Hsin and Liao, Wei-Hsiang and Murata, Naoki and Takida, Yuhta and Uesaka, Toshimitsu and He, Yutong and Mitsufuji, Yuki and Ermon, Stefano},
  journal={arXiv preprint arXiv:2310.02279},
  year={2023}
}

@article{guan2025forecasting,
  title={Forecasting when to forecast: Accelerating diffusion models with confidence-gated taylor},
  author={Guan, Xiaoliu and Jiang, Lielin and Chen, Hanqi and Zhang, Xu and Yan, Jiaxing and Wang, Guanzhong and Liu, Yi and Zhang, Zetao and Wu, Yu},
  journal={Knowledge-Based Systems},
  pages={114635},
  year={2025},
  publisher={Elsevier}
}

@inproceedings{
lu2025simplifying,
title={Simplifying, Stabilizing and Scaling Continuous-time Consistency Models},
author={Cheng Lu and Yang Song},
booktitle={The Thirteenth International Conference on Learning Representations},
year={2025},
url={https://openreview.net/forum?id=LyJi5ugyJx}
}

@inproceedings{
song2024improved,
title={Improved Techniques for Training Consistency Models},
author={Yang Song and Prafulla Dhariwal},
booktitle={The Twelfth International Conference on Learning Representations},
year={2024},
url={https://openreview.net/forum?id=WNzy9bRDvG}
}

@inproceedings{
yin2024improved,
title={Improved Distribution Matching Distillation for Fast Image Synthesis},
author={Tianwei Yin and Micha{\"e}l Gharbi and Taesung Park and Richard Zhang and Eli Shechtman and Fredo Durand and William T. Freeman},
booktitle={The Thirty-eighth Annual Conference on Neural Information Processing Systems},
year={2024},
url={https://openreview.net/forum?id=tQukGCDaNT}
}

@inproceedings{esser2024scaling,
  title={Scaling rectified flow transformers for high-resolution image synthesis},
  author={Esser, Patrick and Kulal, Sumith and Blattmann, Andreas and Entezari, Rahim and M{\"u}ller, Jonas and Saini, Harry and Levi, Yam and Lorenz, Dominik and Sauer, Axel and Boesel, Frederic and others},
  booktitle={Forty-first international conference on machine learning},
  year={2024}
}

@inproceedings{tang2024acceleratingparallelsamplingdiffusion,
  title={Accelerating parallel sampling of diffusion models},
  author={Tang, Zhiwei and Tang, Jiasheng and Luo, Hao and Wang, Fan and Chang, Tsung-Hui},
  booktitle={Forty-first International Conference on Machine Learning},
  year={2024}
}

@article{yang2023diffusionm,
  title={Diffusion models: A comprehensive survey of methods and applications},
  author={Yang, Ling and Zhang, Zhilong and Song, Yang and Hong, Shenda and Xu, Runsheng and Zhao, Yue and Zhang, Wentao and Cui, Bin and Yang, Ming-Hsuan},
  journal={ACM computing surveys},
  volume={56},
  number={4},
  pages={1--39},
  year={2023},
  publisher={ACM New York, NY, USA}
}

@InProceedings{liu2025taylor,
    author    = {Liu, Jiacheng and Zou, Chang and Lyu, Yuanhuiyi and Chen, Junjie and Zhang, Linfeng},
    title     = {From Reusing to Forecasting: Accelerating Diffusion Models with TaylorSeers},
    booktitle = {Proceedings of the IEEE/CVF International Conference on Computer Vision (ICCV)},
    month     = {October},
    year      = {2025},
    pages     = {15853-15863}
}

@article{salimans2022progressive,
  title        = {Progressive distillation for fast sampling of diffusion models},
  author       = {Salimans, Tim and Ho, Jonathan},
  year         = 2022,
  journal      = {arXiv preprint arXiv:2202.00512}
}

@article{selvaraju2024fora,
  title        = {FORA: Fast-Forward Caching in Diffusion Transformer Acceleration},
  author       = {Selvaraju, Pratheba and Ding, Tianyu and Chen, Tianyi and Zharkov, Ilya and Liang, Luming},
  year         = 2024,
  journal      = {arXiv preprint arXiv:2407.01425}
}

@article{chen2024delta-dit,
  title        = {$\Delta$-DiT: A Training-Free Acceleration Method Tailored for Diffusion Transformers},
  author       = {Chen, Pengtao and Shen, Mingzhu and Ye, Peng and Cao, Jianjian and Tu, Chongjun and Bouganis, Christos-Savvas and Zhao, Yiren and Chen, Tao},
  year         = 2024,
  journal      = {arXiv preprint arXiv:2406.01125}
}

@inproceedings{sohl2015deep,
  title        = {Deep unsupervised learning using nonequilibrium thermodynamics},
  author       = {Sohl-Dickstein, Jascha and Weiss, Eric and Maheswaranathan, Niru and Ganguli, Surya},
  year         = 2015,
  booktitle    = {International conference on machine learning},
  pages        = {2256--2265},
  organization = {PMLR}
}

@inproceedings{zou2024accelerating, 
  title        = {Accelerating Diffusion Transformers with Token-wise Feature Caching},
  author       = {Zou, Chang and Liu, Xuyang and Liu, Ting and Huang, Siteng and Zhang, Linfeng},
  booktitle    = {Proceedings of the 13th International Conference on Learning Representations (ICLR 2025)},
  year         = {2025},
  url          = {https://openreview.net/forum?id=yYZbZGo4ei},  
  publisher    = {ICLR}
}

@inproceedings{Liu2025SpeCa,
  title        = {SpeCa: Accelerating Diffusion Transformers with Speculative Feature Caching},
  author       = {Liu, Jiacheng and Zou, Chang and Lyu, Yuanhuiyi and Li, Kaixin and Wang, Shaobo and Zhang, Linfeng},
  booktitle    = {Proceedings of the 33rd ACM International Conference on Multimedia (MM '25)},
  year         = {2025},
  pages        = {to appear},
  publisher    = {ACM},
  address      = {Dublin, Ireland},
  month        = {October},
  institution  = {Shanghai Jiao Tong University and Shandong University and University of Electronic Science and Technology of China 
                  and The Hong Kong University of Science and Technology (Guangzhou) 
                  and National University of Singapore and Shandong University}
}

@inproceedings{Zheng2025Compute,
  title        = {{Compute only 16 tokens in one timestep: Accelerating Diffusion Transformers with Cluster-Driven Feature Caching}},
  author       = {Zheng, Zhixin and Wang, Xinyu and Zou, Chang and Wang, Shaobo and Zhang, Linfeng},
  booktitle    = {Proceedings of the 33rd ACM International Conference on Multimedia (MM '25)},
  year         = {2025},
  pages        = {to appear},
  publisher    = {ACM},
  address      = {Dublin, Ireland},
  month        = {October},
  institution  = {
    Shanghai Jiao Tong University and 
    University of Electronic Science and Technology of China and 
    Shandong University
  }
}

@article{lu2022dpm,
  title        = {Dpm-solver: A fast ode solver for diffusion probabilistic model sampling in around 10 steps},
  author       = {Lu, Cheng and Zhou, Yuhao and Bao, Fan and Chen, Jianfei and Li, Chongxuan and Zhu, Jun},
  year         = 2022,
  journal      = {Advances in Neural Information Processing Systems},
  volume       = 35,
  pages        = {5775--5787}
}

@article{lu2022dpm++,
  title        = {Dpm-solver++: Fast solver for guided sampling of diffusion probabilistic models},
  author       = {Lu, Cheng and Zhou, Yuhao and Bao, Fan and Chen, Jianfei and Li, Chongxuan and Zhu, Jun},
  year         = 2022,
  journal      = {arXiv preprint arXiv:2211.01095}
}

@inproceedings{
zheng2023dpmsolvervF,
title={{DPM}-Solver-v3: Improved Diffusion {ODE} Solver with Empirical Model Statistics},
author={Kaiwen Zheng and Cheng Lu and Jianfei Chen and Jun Zhu},
booktitle={Thirty-seventh Conference on Neural Information Processing Systems},
year={2023},
url={https://openreview.net/forum?id=9fWKExmKa0}
}

@article{Dao2022FlashAttentionFA,
  title        = {FlashAttention: Fast and Memory-Efficient Exact Attention with IO-Awareness},
  author       = {Tri Dao and Daniel Y. Fu and Stefano Ermon and Atri Rudra and Christopher R'e},
  year         = 2022,
  journal      = {ArXiv},
  volume       = {abs/2205.14135},
  url          = {https://api.semanticscholar.org/CorpusID:249151871}
}

@inproceedings{shang2023post,
  title        = {Post-training quantization on diffusion models},
  author       = {Shang, Yuzhang and Yuan, Zhihang and Xie, Bin and Wu, Bingzhe and Yan, Yan},
  year         = 2023,
  booktitle    = {Proceedings of the IEEE/CVF conference on computer vision and pattern recognition},
  pages        = {1972--1981}
}

@inproceedings{yin2024stepdistillation,
  title        = {One-step diffusion with distribution matching distillation},
  author       = {Yin, Tianwei and Gharbi, Micha{\"e}l and Zhang, Richard and Shechtman, Eli and Durand, Fredo and Freeman, William T and Park, Taesung},
  year         = 2024,
  booktitle    = {Proceedings of the IEEE/CVF Conference on Computer Vision and Pattern Recognition},
  pages        = {6613--6623}
}

@inproceedings{refitiedflow,
  title        = {Flow Straight and Fast: Learning to Generate and Transfer Data with Rectified Flow},
  author       = {Liu, Xingchao and Gong, Chengyue and others},
  year         = 2023,
  booktitle    = {The Eleventh International Conference on Learning Representations}
}

@inproceedings{lin2014coco,
  title        = {Microsoft coco: Common objects in context},
  author       = {Lin, Tsung-Yi and Maire, Michael and Belongie, Serge and Hays, James and Perona, Pietro and Ramanan, Deva and Doll{\'a}r, Piotr and Zitnick, C Lawrence},
  year         = 2014,
  booktitle    = {Computer Vision--ECCV 2014: 13th European Conference, Zurich, Switzerland, September 6-12, 2014, Proceedings, Part V 13},
  pages        = {740--755},
  organization = {Springer}
}

@misc{flux2024,
  title        = {FLUX},
  author       = {Black Forest Labs},
  year         = 2024,
  howpublished = {\url{https://github.com/black-forest-labs/flux}}
}

@misc{zhang2024tokenpruningcachingbetter,
      title={Token Pruning for Caching Better: 9 Times Acceleration on Stable Diffusion for Free}, 
      author={Evelyn Zhang and Bang Xiao and Jiayi Tang and Qianli Ma and Chang Zou and Xuefei Ning and Xuming Hu and Linfeng Zhang},
      year={2024},
      eprint={2501.00375},
      archivePrefix={arXiv},
      primaryClass={cs.CV},
      url={https://arxiv.org/abs/2501.00375}, 
}

@misc{zou2024DuCa,
      title={Accelerating Diffusion Transformers with Dual Feature Caching}, 
      author={Chang Zou and Evelyn Zhang and Runlin Guo and Haohang Xu and Conghui He and Xuming Hu and Linfeng Zhang},
      year={2024},
      eprint={2412.18911},
      archivePrefix={arXiv},
      primaryClass={cs.LG},
      url={https://arxiv.org/abs/2412.18911}, 
}

@misc{liu2025regionadaptivesamplingdiffusiontransformers,
      title={Region-Adaptive Sampling for Diffusion Transformers}, 
      author={Ziming Liu and Yifan Yang and Chengruidong Zhang and Yiqi Zhang and Lili Qiu and Yang You and Yuqing Yang},
      year={2025},
      eprint={2502.10389},
      archivePrefix={arXiv},
      primaryClass={cs.CV},
      url={https://arxiv.org/abs/2502.10389}, 
}

@misc{liu2024timestep,
    title={Timestep Embedding Tells: It's Time to Cache for Video Diffusion Model},
    author={Feng Liu and Shiwei Zhang and Xiaofeng Wang and Yujie Wei and Haonan Qiu and Yuzhong Zhao and Yingya Zhang and Qixiang Ye and Fang Wan},
    year={2024},
    eprint={2411.19108},
    archivePrefix={arXiv},
    primaryClass={cs.CV}
}

@misc{cheng2025catpruningclusterawaretoken,
      title={CAT Pruning: Cluster-Aware Token Pruning For Text-to-Image Diffusion Models}, 
      author={Xinle Cheng and Zhuoming Chen and Zhihao Jia},
      year={2025},
      eprint={2502.00433},
      archivePrefix={arXiv},
      primaryClass={cs.CV},
      url={https://arxiv.org/abs/2502.00433}, 
}

@misc{saghatchian2025cached,
    title={Cached Adaptive Token Merging: Dynamic Token Reduction and Redundant Computation Elimination in Diffusion Model},
    author={Omid Saghatchian and Atiyeh Gh. Moghadam and Ahmad Nickabadi},
    year={2025},
    eprint={2501.00946},
    archivePrefix={arXiv},
    primaryClass={cs.CV}
}

@misc{sun2025unicpunifiedcachingpruning,
      title={UniCP: A Unified Caching and Pruning Framework for Efficient Video Generation}, 
      author={Wenzhang Sun and Qirui Hou and Donglin Di and Jiahui Yang and Yongjia Ma and Jianxun Cui},
      year={2025},
      eprint={2502.04393},
      archivePrefix={arXiv},
      primaryClass={cs.CV},
      url={https://arxiv.org/abs/2502.04393}, 
}

@inproceedings{kim2025ditto,
  author = {Sungbin Kim and Hyunwuk Lee and Wonho Cho and Mincheol Park and Won Woo Ro},
  title = {Ditto: Accelerating Diffusion Model via Temporal Value Similarity},
  booktitle = {Proceedings of the 2025 IEEE International Symposium on High-Performance Computer Architecture (HPCA)},
  year = {2025},
  publisher = {IEEE},
}

@misc{zhu2024dipgo,
    title={DiP-GO: A Diffusion Pruner via Few-step Gradient Optimization},
    author={Haowei Zhu and Dehua Tang and Ji Liu and Mingjie Lu and Jintu Zheng and Jinzhang Peng and Dong Li and Yu Wang and Fan Jiang and Lu Tian and Spandan Tiwari and Ashish Sirasao and Jun-Hai Yong and Bin Wang and Emad Barsoum},
    year={2024},
    eprint={2410.16942},
    archivePrefix={arXiv},
    primaryClass={cs.CV}
}

@misc{VBench,
	title = {{VBench}: {Comprehensive} {Benchmark} {Suite} for {Video} {Generative} {Models}},
	shorttitle = {{VBench}},
	url = {http://arxiv.org/abs/2311.17982},
	doi = {10.48550/arXiv.2311.17982},
	urldate = {2025-02-27},
	publisher = {arXiv},
	author = {Huang, Ziqi and He, Yinan and Yu, Jiashuo and Zhang, Fan and Si, Chenyang and Jiang, Yuming and Zhang, Yuanhan and Wu, Tianxing and Jin, Qingyang and Chanpaisit, Nattapol and Wang, Yaohui and Chen, Xinyuan and Wang, Limin and Lin, Dahua and Qiao, Yu and Liu, Ziwei},
	month = nov,
	year = {2023},
	note = {arXiv:2311.17982 [cs]},
}

@misc{xuImageRewardLearningEvaluating2023,
	title = {{ImageReward}: {Learning} and {Evaluating} {Human} {Preferences} for {Text}-to-{Image} {Generation}},
	shorttitle = {{ImageReward}},
	url = {http://arxiv.org/abs/2304.05977},
	doi = {10.48550/arXiv.2304.05977},
	urldate = {2025-03-05},
	publisher = {arXiv},
	author = {Xu, Jiazheng and Liu, Xiao and Wu, Yuchen and Tong, Yuxuan and Li, Qinkai and Ding, Ming and Tang, Jie and Dong, Yuxiao},
	month = dec,
	year = {2023},
	note = {arXiv:2304.05977 [cs]},

}

@article{structural_pruning_diffusion,
  title        = {Structural Pruning for Diffusion Models},
  author       = {Fang, Gongfan and Ma, Xinyin and Wang, Xinchao},
  year         = 2023,
  journal      = {arXiv preprint arXiv:2305.10924}
}

@misc{lvFasterCacheTrainingFreeVideo2025,
  title = {{{FasterCache}}: {{Training-Free Video Diffusion Model Acceleration}} with {{High Quality}}},
  shorttitle = {{{FasterCache}}},
  author = {Lv, Zhengyao and Si, Chenyang and Song, Junhao and Yang, Zhenyu and Qiao, Yu and Liu, Ziwei and Wong, Kwan-Yee K.},
  year = {2025},
  number = {arXiv:2410.19355},
  eprint = {2410.19355},
  primaryclass = {cs},
  publisher = {arXiv},
  doi = {10.48550/arXiv.2410.19355},
  archiveprefix = {arXiv}
}
}

% WARNING: do not forget to delete the supplementary pages from your submission 
% \input{sec/X_suppl}

\appendix

\onecolumn

    \begin{center}
        \Large
        \textbf{\thetitle}\\
        \vspace{0.5em}Supplementary Material \\
        \vspace{1.0em}
        
    \end{center}

\section{Proofs}

\subsection{Proof of Theorem~\ref{thm:taylor-minimax-lb}}

\begin{proof}
The proof follows directly from Taylor's theorem with the Lagrange form of the remainder.
Let $f \in \mathcal{F}_{P+1}(L)$. For any $\tau_j, \tau_k \in [0, 1]$, Taylor's theorem states that there exists a $\xi$ between $\tau_k$ and $\tau_j$ such that:
\[
f(\tau_j) = \sum_{p=0}^P \frac{f^{(p)}(\tau_k)}{p!} (\tau_j - \tau_k)^p + \frac{f^{(P+1)}(\xi)}{(P+1)!} (\tau_j - \tau_k)^{P+1}.
\]
The predictor $T_P[f](\tau_j)$ is defined as the first term (the Taylor polynomial). Thus, the approximation error is the absolute value of the remainder term:
\[
|f(\tau_j) - T_P[f](\tau_j)| = \left| \frac{f^{(P+1)}(\xi)}{(P+1)!} (\tau_j - \tau_k)^{P+1} \right|.
\]
Since $f \in \mathcal{F}_{P+1}(L)$, we have $\|f^{(P+1)}\|_\infty \le L$, which implies $|f^{(P+1)}(\xi)| \le L$. Substituting the step size $\tau_j - \tau_k = (j-k)\delta_t$, we obtain the upper bound:
\[
|f(\tau_j) - T_P[f](\tau_j)| \le \frac{L}{(P+1)!} \left( (j-k)\delta_t \right)^{P+1}.
\]
To show that this is the supremum (worst-case error), consider the specific function $f^*(\tau) = \frac{L}{(P+1)!}(\tau - \tau_k)^{P+1}$. The $(P+1)$-th derivative of $f^*$ is identically $L$, so $f^* \in \mathcal{F}_{P+1}(L)$. For this function, the derivatives $f^{*(p)}(\tau_k)$ are 0 for all $p \le P$, meaning $T_P[f^*](\tau_j) = 0$. The error is exactly $|f^*(\tau_j)| = \frac{L}{(P+1)!}((j-k)\delta_t)^{P+1}$, which attains the bound.
\end{proof}

\subsection{Proof of Theorem~\ref{theo:universal}}

\begin{proof}
This result is a standard theorem in approximation theory, often referred to as Bernstein's Theorem for Chebyshev approximation.
Recall that any function $f$ analytic on the Bernstein ellipse $E_\rho$ can be expanded in a Chebyshev series $f(\tau) = \sum_{k=0}^\infty a_k T_k(\tau)$. The coefficients $a_k$ satisfy the geometric decay bound $|a_k| \le 2B \rho^{-k}$ for $k \ge 1$, where $B = \sup_{z \in E_\rho} |f(z)|$.

The truncation error $f(\tau) - p_M(\tau)$ consists of the tail of the series:
\[
f(\tau) - p_M(\tau) = \sum_{k=M+1}^\infty a_k T_k(\tau).
\]
Taking the infinity norm on $[-1, 1]$ and using the property that $|T_k(\tau)| \le 1$ for $\tau \in [-1, 1]$:
\[
\|f - p_M\|_\infty \le \sum_{k=M+1}^\infty |a_k| \le \sum_{k=M+1}^\infty 2B \rho^{-k}.
\]
The right-hand side is a geometric series with ratio $\rho^{-1} < 1$:
\[
\sum_{k=M+1}^\infty 2B \rho^{-k} = 2B \frac{\rho^{-(M+1)}}{1 - \rho^{-1}} = \frac{2B \rho^{-M}}{\rho - 1}.
\]
Thus, $\|f - p_M\|_\infty \le \frac{2B}{\rho - 1}\rho^{-M}$.
\end{proof}

\subsection{Proof of Theorem~\ref{theo:our-error}}
Let $h_i(t)$ be the true feature function and $p_M(t) = \bm\phi(\tau) \rmC^*$ be its optimal Chebyshev approximation of degree $M$, where $\rmC^*$ is the vector of ideal Chebyshev coefficients. From Theorem~\ref{theo:universal}, we define the truncation error function $e(t) = h_i(t) - p_M(t)$, bounded by $|e(t)| \le \epsilon_M$.

We observe data at cached timesteps $\sC_{t_j} = \{(\rvh_{t_k}, t_k)\}$. Let $\rmH \in \mathbb{R}^K$ be the vector of observed values for channel $i$ (dropping indices $t_j$ for simplicity). We can write the observation model as:
\[
\rmH = \bm\Phi \rmC^* + \rmE,
\]
where $\bm\Phi$ is the design matrix and $\rmE$ is the vector of truncation errors at the cached points, with $\|\rmE\|_\infty \le \epsilon_M$.
The fitted coefficients $\widehat{\rmC}$ are obtained via ridge regression:
\[
\widehat{\rmC} = (\bm\Phi^\top \bm\Phi + \lambda \rmI)^{-1} \bm\Phi^\top \rmH.
\]
Substituting $\rmH$:
\[
\widehat{\rmC} = (\bm\Phi^\top \bm\Phi + \lambda \rmI)^{-1} \bm\Phi^\top (\bm\Phi \rmC^* + \rmE).
\]
We want to bound the prediction error at a new time $t_j$ (with $\tau_j = g(t_j)$). The true value is $h_i(t_j) = \bm\phi(\tau_j) \rmC^* + e(t_j)$, and the prediction is $\widehat{h}_i(t_j) = \bm\phi(\tau_j) \widehat{\rmC}$. The error is:
\[
\widehat{h}_i(t_j) - h_i(t_j) = \bm\phi(\tau_j) (\widehat{\rmC} - \rmC^*) - e(t_j).
\]
Let $\rmA = \bm\Phi^\top \bm\Phi + \lambda \rmI$. Note that $\rmA^{-1} \bm\Phi^\top \bm\Phi = \rmI - \lambda \rmA^{-1}$.
The coefficient error is:
\[
\begin{aligned}
\widehat{\rmC} - \rmC^* &= \rmA^{-1} \bm\Phi^\top (\bm\Phi \rmC^* + \rmE) - \rmC^* \\
&= (\rmA^{-1} \bm\Phi^\top \bm\Phi - \rmI) \rmC^* + \rmA^{-1} \bm\Phi^\top \rmE \\
&= -\lambda \rmA^{-1} \rmC^* + \rmA^{-1} \bm\Phi^\top \rmE.
\end{aligned}
\]
Substituting this back into the prediction error:
\[
\widehat{h}_i(t_j) - h_i(t_j) = \bm\phi(\tau_j) \rmA^{-1} \bm\Phi^\top \rmE - \lambda \bm\phi(\tau_j) \rmA^{-1} \rmC^* - e(t_j).
\]
Using the triangle inequality:
\[
|\widehat{h}_i - h_i| \le |\bm\phi(\tau_j) \rmA^{-1} \bm\Phi^\top \rmE| + |\lambda \bm\phi(\tau_j) \rmA^{-1} \rmC^*| + |e(t_j)|.
\]
Now we bound each term.

For the first term, we have $|\bm\phi \rmA^{-1} \bm\Phi^\top \rmE| \le \|\bm\phi\|_2\|\rmA^{-1}\|_2 \|\bm\Phi^\top \rmE\|_2$. Since $|T_m| \le 1$, we have $\|\bm\phi(\tau_j)\|_2 \le \sqrt{M+1}$. We also have $\|\rmA^{-1}\|_2 \le \frac{1}{\sigma_{\min}^2(\bm\Phi) + \lambda}$, and $\|\bm\Phi^\top \rmE\|_2 = \|\sum_{k=1}^K \bm\phi(\tau_k)^\top E_k\|_2 \le \sum_{k=1}^K \|\bm\phi(\tau_k)\|_2 |E_k| \le K \sqrt{M+1} \epsilon_M$. Combining the above, we have $|\bm\phi \rmA^{-1} \bm\Phi^\top \rmE| \le \frac{(M+1)K}{\sigma_{\min}^2 + \lambda} \epsilon_M$.

For the second term, we have $|\lambda \bm\phi \rmA^{-1} \rmC^*| \le \lambda \|\bm\phi\|_2 \|\rmA^{-1}\|_2 \|\rmC^*\|_2$. By the coefficient bound, we get 
\[
     \|\rmC^*\|_2 \le \sqrt{\sum_{m=0}^\infty 4B^2 \rho^{-2m}} = 2B \sqrt{\frac{1}{1 - \rho^{-2}}}.
     \]
     Combining with the bound in the first term, we have $|\lambda \bm\phi \rmA^{-1} \rmC^*| \le \frac{\lambda \sqrt{M+1}}{\sigma_{\min}^2 + \lambda} \frac{2B}{\sqrt{1-\rho^{-2}}}$.

     For the third term, $|e(t_j)| \le \epsilon_M$.

     Therefore, combining the bound for the three terms above gives the final bound:
     \[
|h_i(t_j) - \widehat{h}_i(t_j)| \le \epsilon_M \left(1 + \frac{(M+1)K}{\sigma_{\min}^2 + \lambda}\right) + \frac{\lambda \sqrt{M+1}}{\sigma_{\min}^2 + \lambda} \frac{2B}{\sqrt{1 - \rho^{-2}}}.
\]

\section{More Method Details}
\subsection{Adaptive Scheduling}
Most previous works~\citep{selvaraju2024fora,zou2024accelerating,liu2025taylor} on diffusion caching employ a uniform scheduling in selecting $\sU$ \emph{i.e.}, the set of timesteps to perform full network evaluation. For this strategy, a full network evaluation is performed every fixed interval of length $\mathcal{N} \in \sN^+$. Under our framework, this corresponds to $\sU_{\text{uniform}} = \{1\} \cup \{\tau_j: j=r\mathcal{N}, r\in\sN^+, 1\leq j\leq N\}$, where $N=50$ is the total number of discretization timesteps. However, this scheduling does not account for the fact that errors made in early diffusion steps accumulate and disproportionately affect later steps, eventually degrading the final output more significantly.

Other works~\citep{liu2024timestep,lvFasterCacheTrainingFreeVideo2025,sun2025unicpunifiedcachingpruning,liu2025regionadaptivesamplingdiffusiontransformers} attempt to address this by adaptively adjusting the scheduling $\sU$, but their methods typically require tracking inference time metrics, which introduces additional computation overheads. In contrast, we propose a simple, precomputed adaptive scheduling that progressively increases the interval length between adjacent timesteps, formulated as:
\begin{align*}
    \sU=\{\tau_j: j=\floor{\alpha \frac{r(r+1)}{2}}, r\in\sN^+, 1\leq j\leq N\},
\end{align*}
where $\alpha\geq 0$ characterizes the \emph{rate of increase} of the interval $\tau_j-\tau_k$ between adjacent timesteps $j$ and $k$.

\noindent
\textbf{Unified parameterization.}
Besides this progressively increasing recomputation intervals, our actual implementation has two additional components: the initial interval size $\gN$ and the number of warm-up steps $\gW$, which are \emph{directly inherited} from the original setup in TaylorSeer~\cite{liu2025taylor}. The initial interval size $\gN$ determines the length of the first interval, from which subsequent intervals grow. The warm-up step $\gW$ corresponds to the length of the initial phase during which full network evaluations are computed at every step, before the caching and forecasting procedure begins. The detailed formulation of \method's adaptive scheduling incorporated with these two hyperparameters is as follows:
\begin{align*}
\sU
    &= \{\tau_j : j \in \sN^+,~ 1 \le j \le \gW \}~\cup~ \Big\{ \tau_j : 
        j = \gW 
        + \Big\lfloor (r+1)\gN +
            \alpha \frac{r(r+1)}{2}
        \Big\rfloor, r \in \sN,~ 1 \le j \le N \Big\}.
\end{align*}
Therefore, the uniform scheduling $\sU_{\text{uniform}}$ in previous works can be exactly recovered by setting $\alpha=0.0$.

Under this unified parameterization, we provide detailed specifications of the baselines, which are parameterized by $\gN$ and $\gW$, and~\method, which is additionally parameterized by $\alpha$, in Table~\ref{tab:detailed-scheduling}. For clarity, we also include their corresponding number of network evaluations (NFE), which is closely related to their actual wall clock time reported in the main tables.

\section{More Experiment Details}

% Experiment settings

\subsection{Model Settings}
\noindent\textbf{Text-to-image.} We employ FLUX.1-dev\footnote{\texttt{black-forest-labs/FLUX.1-dev}} and Stable Diffusion 3.5-Large\footnote{\texttt{stabilityai/stable-diffusion-3.5-large}} for text-to-image generation. For the main experiments, we generate images with 1024$\times$1024 resolution. We apply the default guidance scale, which is 3.5 for FLUX and 7.0 for Stable Diffusion 3.5-Large.

% Table generated by Excel2LaTeX from sheet 'Sheet1'
\begin{table}[t!]
  \centering
  \small
  \caption{\textbf{Detailed specification on the scheduler and the total number of network evaluations (NFE) for all methods.} ``Reference'' refers to the tables in the main paper where the corresponding method was mentioned.}
      % \resizebox{\linewidth}{!}{
    \begin{tabular}{cccccc}
    \toprule
      & Reference & $\gN$    & $\gW$ & $\alpha$     &  NFE \\
    \midrule
    FORA $(\mathcal{N}=4)$      &  Table~\ref{tab:image-maintable},~\ref{tab:video-maintable}      &   4    &  1     &    0.0   & 13 \\
     ToCa $(\gN=4)$     &    Table~\ref{tab:image-maintable},~\ref{tab:video-maintable}   &    4   &     3  &    0.0   &14  \\
     TaylorSeer $(\gN=4)$     &    Table~\ref{tab:image-maintable},~\ref{tab:video-maintable}   &    4   &  5     &   0.0    & 16 \\
       \method $(\alpha=0.75)$  &    Table~\ref{tab:image-maintable},~\ref{tab:video-maintable}   &   2    &   5    &  0.75     & 14 \\
       \midrule
           FORA $(\mathcal{N}=6)$      &  Table~\ref{tab:image-maintable},~\ref{tab:video-maintable}     &    6   &    1   &  0.0     & 9 \\
     ToCa $(\gN=6)$     &    Table~\ref{tab:image-maintable},~\ref{tab:video-maintable}   &   6    &    3   &    0.0   & 10 \\
      TaylorSeer $(\gN=6)$     &    Table~\ref{tab:image-maintable},~\ref{tab:video-maintable}   &   6    &   5    &   0.0    & 12 \\
        \method $(\alpha=3.0)$  &   Table~\ref{tab:image-maintable},~\ref{tab:video-maintable}    &    2   &   5    &   3.0    & 10 \\
        \midrule
      TaylorSeer $(\gN=8)$     &   Table~\ref{tab:ablate_adaptive_scheduling}    &     8  &    5   &     0.0  & 10 \\
        TaylorSeer $(\alpha=3.0)$     &  Table~\ref{tab:ablate_adaptive_scheduling}     &   2    &    5   &   3.0    & 10 \\
      \method $(\gN=8)$     &   Table~\ref{tab:ablate_adaptive_scheduling}    &   8    &   5    &      0.0 & 10 \\
        \method $(\alpha=3.0)$     &  Table~\ref{tab:ablate_adaptive_scheduling}     &    2   &   5    &    3.0   & 10 \\
    \bottomrule
    \end{tabular}%
    % }
  \label{tab:detailed-scheduling}%
\end{table}%

\noindent\textbf{Text-to-video.} We adopt Wan2.1-14B\footnote{\texttt{Wan-AI/Wan2.1-T2V-14B}} and HunyuanVideo\footnote{\texttt{hunyuanvideo-community/HunyuanVideo}} for text-to-video generation. We generate 480p videos with 81 frames following the prompt suite in VBench~\cite{VBench} and strictly follow the benchmark protocol of VBench to evaluate the quality metrics and VBench Quality Score. Following convention, we apply a guidance scale of 5.0 for Wan2.1-14B and 6.0 for HunyuanVideo throughout all experiments.

\begin{figure}[t!]
    \centering
    \includegraphics[width=0.99\linewidth]{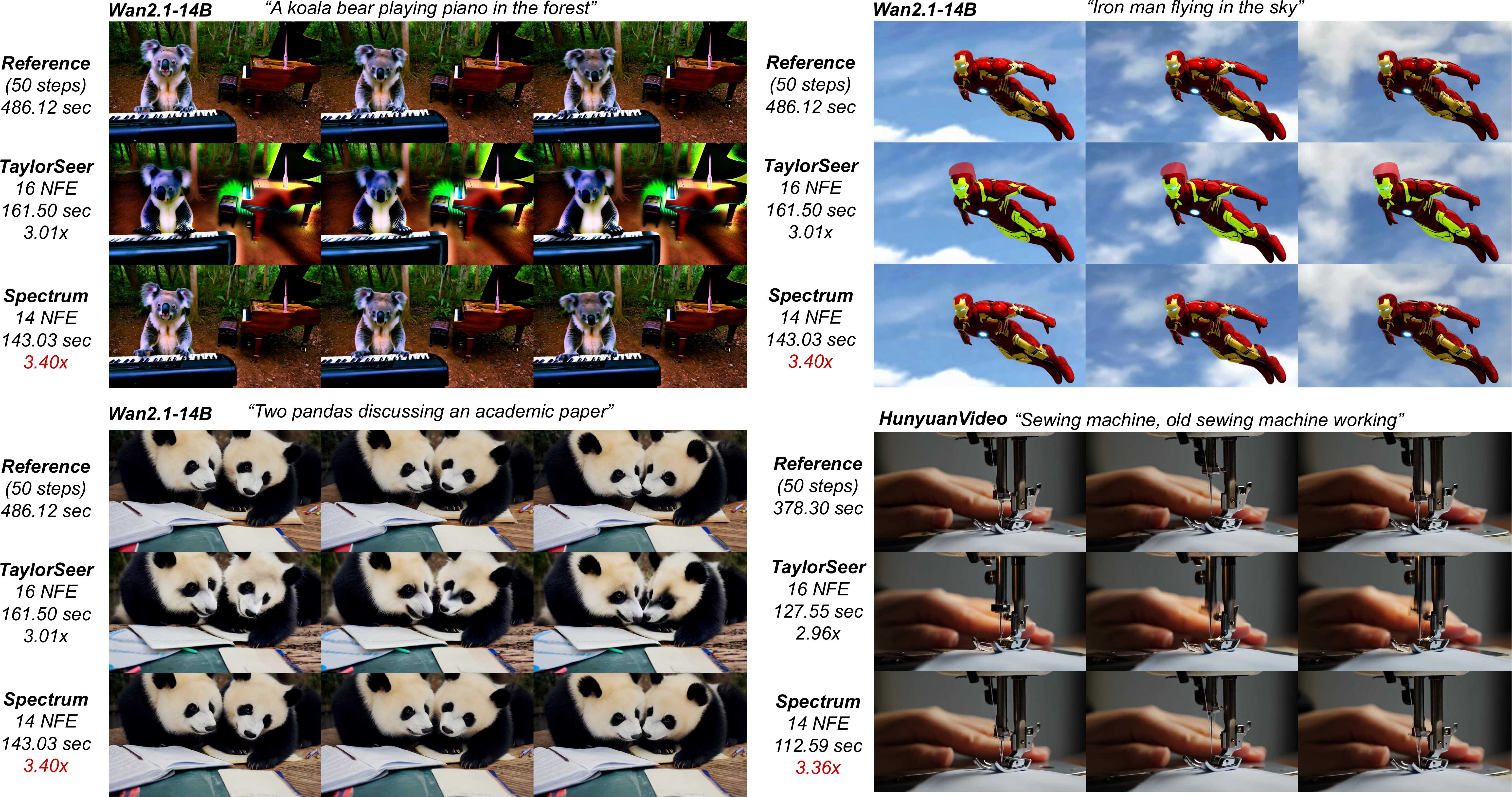}
    \caption{\textbf{Additional qualitative comparison on text-to-video generation} using Wan2.1-14B and HunyuanVideo.}
    \label{fig:more-text-to-video}
\end{figure}

\subsection{Baseline Settings}
For the baselines that employ a uniform activation scheduling (FORA, ToCa, and TaylorSeer), we set interval size $\mathcal{N}$ to be 4 in the slow acceleration setting and 6 in the high acceleration setting. We set $\gW=1$ for FORA, $R=90\%, \gW=3$ for ToCa, and $\gW=5$ for TaylorSeer to match the evaluation setting in~\citet{liu2025taylor}. TeaCache employs a non-deterministic scheduling that may vary across different runs, but we found that setting the caching threshold $\delta$ to $0.2$ and $0.8$ results in similar acceleration rates.

For \method, we use the same number of warm-up steps ($\gW=5$) as TaylorSeer and set $\gN=2$ for the settings that use the adaptive activation scheduling. We set $\alpha=0.75$ and $\alpha=3.0$, respectively, for the slow and high acceleration settings.

\subsection{Ablation Study Settings}
We conduct the ablation on the adaptive activation scheduling in the high-speed up setting to better visualize its impact. To ensure a fair comparison under the same computation budget, we set the Taylor polynomial method's interval size to $\gN=8$, which lets it consume the same number of network evaluations (NFE) as \method with $\alpha=3.0$. We conducted the ablation study on the effect of last-block forecasting at the preliminary stage of our project, so we used the uniform activation scheduling with $N=8$ for both the Taylor method and \method. We use adaptive scheduling for the ablation study on the regularization weight $\lambda$ and the degree of the Chebyshev polynomial $M$. The 5 different acceleration ratios in~\cref{fig:ablate_lambda,fig:ablate_m} correspond to the $\alpha$ values $[0.2, 0.4, 0.75, 1.5, 3.0]$, which incur the following NFEs $[20,17,14,12,10]$.

\section{More Results}
\begin{table*}[t!]
  \centering
  \setlength{\tabcolsep}{3pt}
  \caption{Benchmark results of \textbf{text-to-image generation} on COCO2017~\citep{lin2014coco} using FLUX.1. 
  We use 50 steps as the reference ($\dagger$). Our~\method achieves higher speedup while 
  maintaining better sample quality.}
  \vskip -0.05in
  \newcommand{\splitheader}[1]{\begin{tabular}{@{}c@{}}#1\end{tabular}}
  \resizebox{0.75\linewidth}{!}{
    \begin{tabular}{lccccccc}
    \toprule
          & \multicolumn{7}{c}{\textbf{FLUX.1}~\cite{flux2024}} \\
          \cmidrule(lr){2-8}
          & \multicolumn{2}{c}{Acceleration} 
          & \multicolumn{3}{c}{Quality} 
          & \multirow{2}[1]{*}{\splitheader{Image\\Reward$\uparrow$}} 
          & \smash{\raisebox{-2ex}{CLIP$\uparrow$}} 
          \\
           \cmidrule(lr){2-3}\cmidrule(lr){4-6}
          & Latency(s)$\downarrow$ & Speedup$\uparrow$ 
          & PSNR$\uparrow$  & SSIM$\uparrow$  & LPIPS$\downarrow$ 
          & & \\
    \midrule

    50 steps$^\dagger$ 
        & 26.46 & 1.00 & - & - & - & 1.13 & 25.92 \\

    \midrule

    FORA $(\mathcal{N}=6)$~\cite{selvaraju2024fora}  
        & 6.37 & 4.16 & 13.94 & 0.604 & 0.523 & 1.04 & \textbf{26.10} \\

    ToCa $(\mathcal{N}=6, \mathcal{R}=0.9)$~\cite{zou2024accelerating}  
        & 13.51 & 1.96 & 16.14 & 0.652 & 0.462 & 1.10 & \textbf{26.10} \\

    TeaCache $(\delta=0.8)$~\cite{liu2024timestep} 
        & 6.73 & 3.94 & 16.07 & 0.664 & 0.436 & 1.03 & 25.88 \\

    TaylorSeer $(\mathcal{N}=6, \mathcal{O}=1)$~\cite{liu2025taylor} 
        & 6.52 & 4.06 & 19.92 & 0.782 & 0.297 & 1.10 & 25.98 \\

    \cellcolor{gray!15}{\method $(\alpha=3.0)$} 
        & \cellcolor{gray!15}{\textbf{5.58}} 
        & \cellcolor{gray!15}{\textbf{4.75}} 
        & \cellcolor{gray!15}{\textbf{22.29}} 
        & \cellcolor{gray!15}{\textbf{0.811}} 
        & \cellcolor{gray!15}{\textbf{0.288}} 
        & \cellcolor{gray!15}{\textbf{1.11}} 
        & \cellcolor{gray!15}{25.95} \\

    \bottomrule
    \end{tabular}%
  }
  \vskip -0.1in
  \label{tab:image-maintable-coco}
\end{table*}
\noindent\textbf{Evaluation on COCO2017 captions dataset.} We provide additional results on the COCO2017 captions dataset~\citep{lin2014coco}  using FLUX.1 in Table~\ref{tab:image-maintable-coco}.~\method consistently exhibits exceptional performance on the new set of prompts.

\begin{figure*}[t!]
    \centering
    \includegraphics[width=0.80\linewidth]{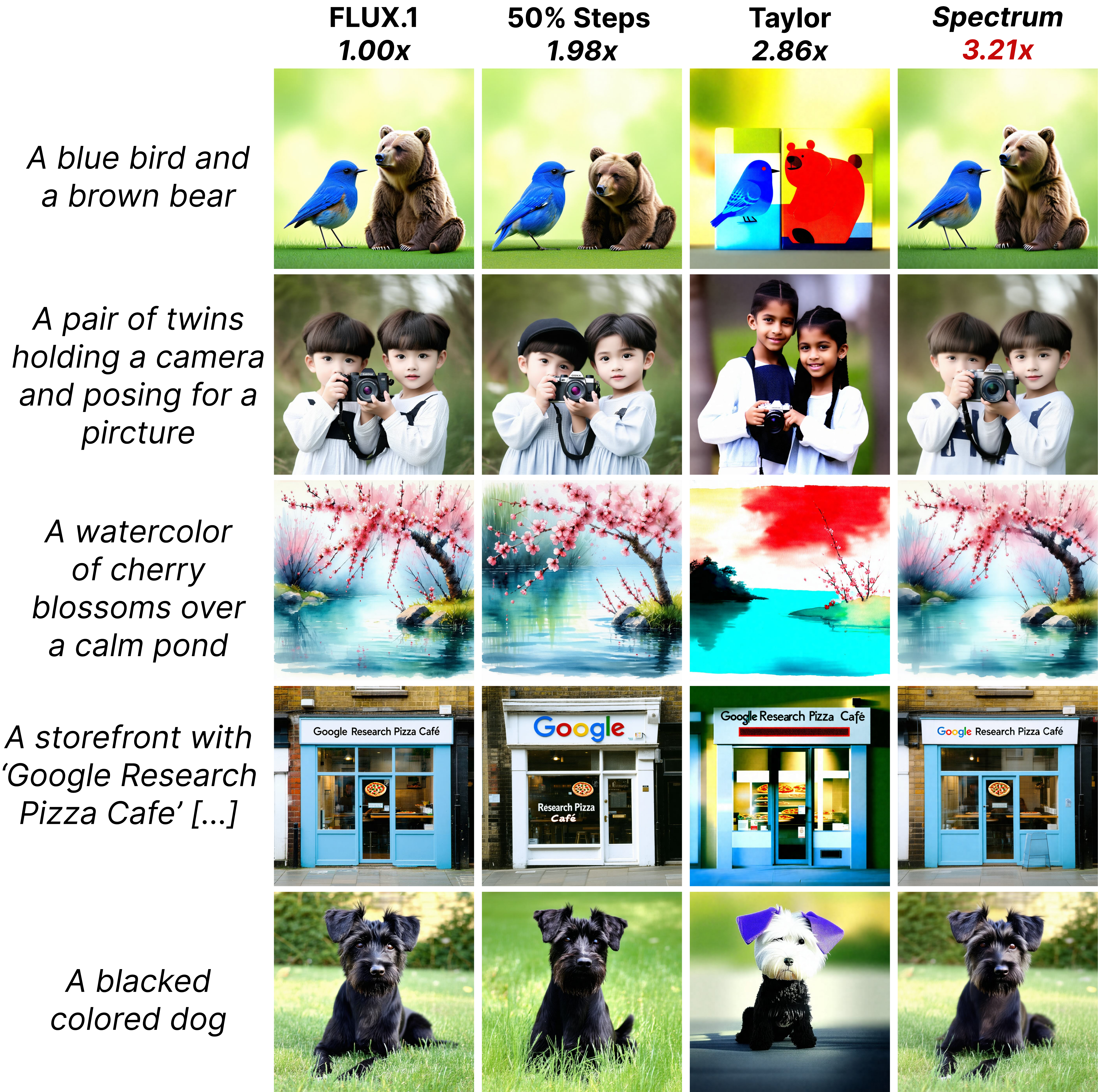}
    \caption{\textbf{Qualitative comparison on text-to-image generation} using Stable Diffusion 3.5-Large}
    \label{fig:image-qualitative-compare-sd3-5-full}
    % \vskip -0.2in
\end{figure*}

\noindent\textbf{More qualitative results.} We provide additional qualitative comparisons on text-to-image generation with FLUX.1 in Fig.~\ref{fig:image-qualitative-compare-flux-full} and with Stable Diffusion 3.5-Large in Fig.~\ref{fig:image-qualitative-compare-sd3-5-full}. We also provide more qualitative comparisons on text-to-image generation with Wan2.1-14B and HunyuanVideo in Fig.~\ref{fig:more-text-to-video}. We offer more visualizations of the generated video samples with~\method using HunyuanVideo in Fig.~\ref{fig:more-video-samples}.

\noindent\textbf{More latent feature results.} We provide RMSE on latent features on HunyuanVideo in Table~\ref{tab:latent_feature_hunyuan}.

\noindent\textbf{Generalization to U-Net architecture.} We additionally investigate the efficacy of \emph{Spectrum} on SDXL with U-Net architecture. Results are summarized in Fig.~\ref{fig:sdxl}, which shows that~\emph{Spectrum} remains highly effective on SDXL, outperforming TaylorSeer by a significant margin.

\begin{figure*}[t!]
    \centering
    \includegraphics[width=0.99\linewidth]{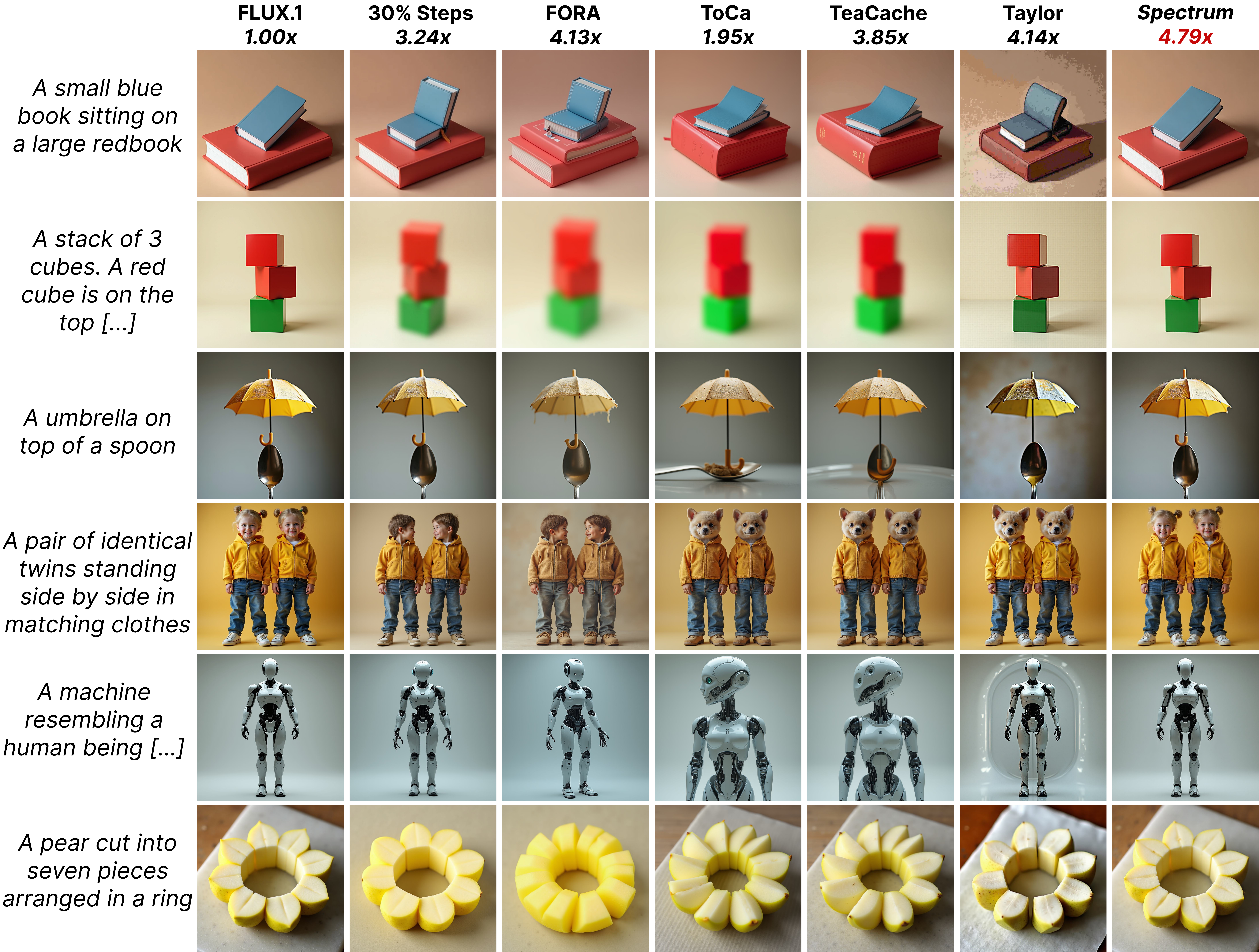}
    \caption{\textbf{Additional qualitative comparison on text-to-image generation} using FLUX.1.}
    \label{fig:image-qualitative-compare-flux-full}
    % \vskip -0.2in
\end{figure*}

% \noindent
\begin{figure}[t]
\centering

\begin{minipage}[t]{0.48\textwidth}
 \vspace{0pt}
  \centering
  \small
  \captionof{table}{RMSE between the predicted latent features and oracle.}
  \label{tab:latent_feature_hunyuan}
  \resizebox{0.88\linewidth}{!}{%
    \begin{tabular}{clccccc}
      \toprule
      & Diffusion step & 10 & 20 & 30 & 40 & 50 \\
      \midrule
      \multirow{2}{*}{Hunyuan} & TaylorSeer     & 0.0047 & 0.0102 & 0.0206 & 0.0467 & 0.1562 \\
                              & \emph{Spectrum} & 0.0021 & 0.0046 & 0.0081 & 0.0182 & 0.0818 \\
      \bottomrule
    \end{tabular}%
  }
\end{minipage}
\begin{minipage}[t]{0.48\textwidth}
 \vspace{0pt}
  \centering
  \includegraphics[width=0.9\linewidth]{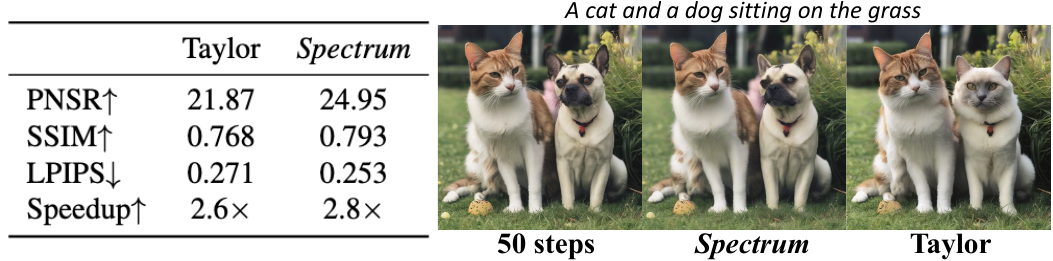}
  \vskip -0.1in
  \caption{Additional results on SDXL (U-Net architecture).}
  \label{fig:sdxl}
\end{minipage}

\end{figure}

% Table generated by Excel2LaTeX from sheet 'Sheet1'

\begin{figure*}[t!]
    \centering
    \includegraphics[width=0.99\linewidth]{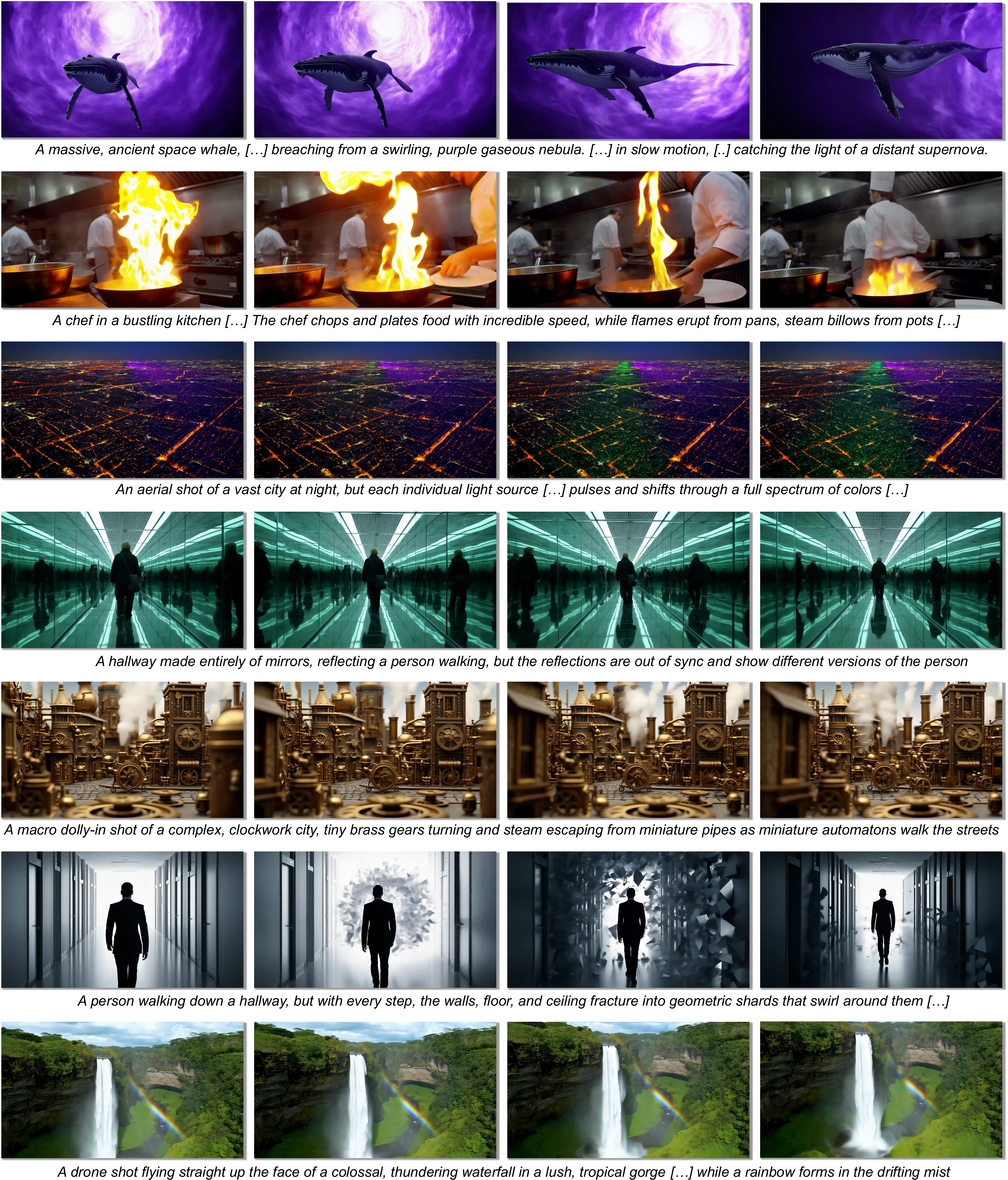}
    \caption{More \textbf{text-to-video generation samples} on HunyuanVideo with~\method. Samples were generated using only \textbf{14 network evaluations}, leading to a significant speedup of 3.5$\times$ without quality degradation.}
    \label{fig:more-video-samples}
\end{figure*}

\end{document}